\newtheorem{theorem}{Theorem}
\newtheorem{assumption}{Assumption}
\newtheorem{remark}{Remark}
\newtheorem{lemma}{Lemma}
\newcommand{\cmark}{\ding{51}}%
\newcommand{\xmark}{\ding{55}}%
\def\BibTeX{{\rm B\kern-.05em{\sc i\kern-.025em b}\kern-.08em
    T\kern-.1667em\lower.7ex\hbox{E}\kern-.125emX}}
\begin{document}

\title{Straggler-Agnostic and Communication-Efficient Distributed Primal-Dual Algorithm for High-Dimensional Data Mining}


\author{\IEEEauthorblockN{Zhouyuan Huo, Heng Huang}
\IEEEauthorblockA{\textit{Department of Electrical and Computer Engineering} \\
\textit{University of Pittsburgh}\\
Pittsburgh, United States \\
zhouyuan.huo, heng.huang@pitt.edu}
}

\maketitle
\begin{abstract}
	Recently, reducing the communication time between machines becomes the main focus of the distributed data mining. Previous methods propose to make workers do more computation locally before aggregating local solutions in the server such that fewer communication rounds between server and workers are required. However, these methods do not consider reducing the communication time per round and work very poor under certain conditions, for example, when there are straggler problems or the dataset is of high dimension. In this paper, we target to reduce communication time per round as well as the required communication rounds. We propose a communication-efficient distributed primal-dual method with straggler-agnostic server and bandwidth-efficient workers. We analyze the convergence property and prove that the proposed method guarantees linear convergence rate to the optimal solution for convex problems. Finally,  we conduct large-scale experiments in simulated and real distributed systems and experimental results demonstrate that the proposed method is much faster than compared methods.
\end{abstract}


\section{Introduction}
Distributed optimization methods are nontrivial when we optimize a data mining problem when the data or model is distributed across multiple machines. When data are distributed, parameter server \cite{dean2012large,li2014communication} or decentralized methods \cite{lian2017can,lian2017asynchronous} were proposed for parallel computation and linear speedup. When model are distributed, especially deep learning model, pipeline-based methods or decoupled backpropagation algorithm \cite{huo2018decoupled,xu2019diversely,huo2018training,yang2019ouroboros}  parallelized the model updating on different machines and made full use of computing resources. In this paper, we only consider the case that data are distributed.

As in Figure \ref{sh}, most distributed methods require collecting update information from all workers iteratively to find the optimal solution \cite{boyd2011distributed,jaggi2014communication,lee2015distributed,huo2017asynchronous,huo2018asynchronous}. As the communication in the network is slow, it is challenging to obtain accurate solutions if there is a limited time budget. Total running time of the distributed methods is determined by multiple factors. To get  $\varepsilon$-accurate solutions, the total running time of a distributed algorithm can be represented as follows:
\begin{eqnarray}
\mathcal{T}(\mathcal{A}, \varepsilon) = \sum\limits_{t=1}^{I(\mathcal{A},\varepsilon)} \left( \mathcal{T}_c(d) + \max_k \mathcal{T}_{\mathcal{A},t}^k  \right), 
\end{eqnarray}
where $I(\mathcal{A},\varepsilon)$ denotes the number of  communication rounds the algorithm $\mathcal{A}$ requires to get $\varepsilon$-accurate solutions and $ \mathcal{T}_c(d)$ represents the communication time per round, which is dependent on the dimensionality $d$ of the data.  $ \max\limits_k \mathcal{T}_{\mathcal{A},t}^k$ indicates the computational time required by the slowest worker at round $t$, such that all workers have completed their jobs at that time.  To reduce the total running time $\mathcal{T}(\mathcal{A},\varepsilon)$, we can either decrease the number of communication rounds $I(\mathcal{A}, \varepsilon)$, or cut down the running time at each round $\left( \mathcal{T}_c(d) +\max\limits_k \mathcal{T}_{\mathcal{A},t}^k \right).$

\begin{figure}[t]
	\centering
	\includegraphics[width=2in]{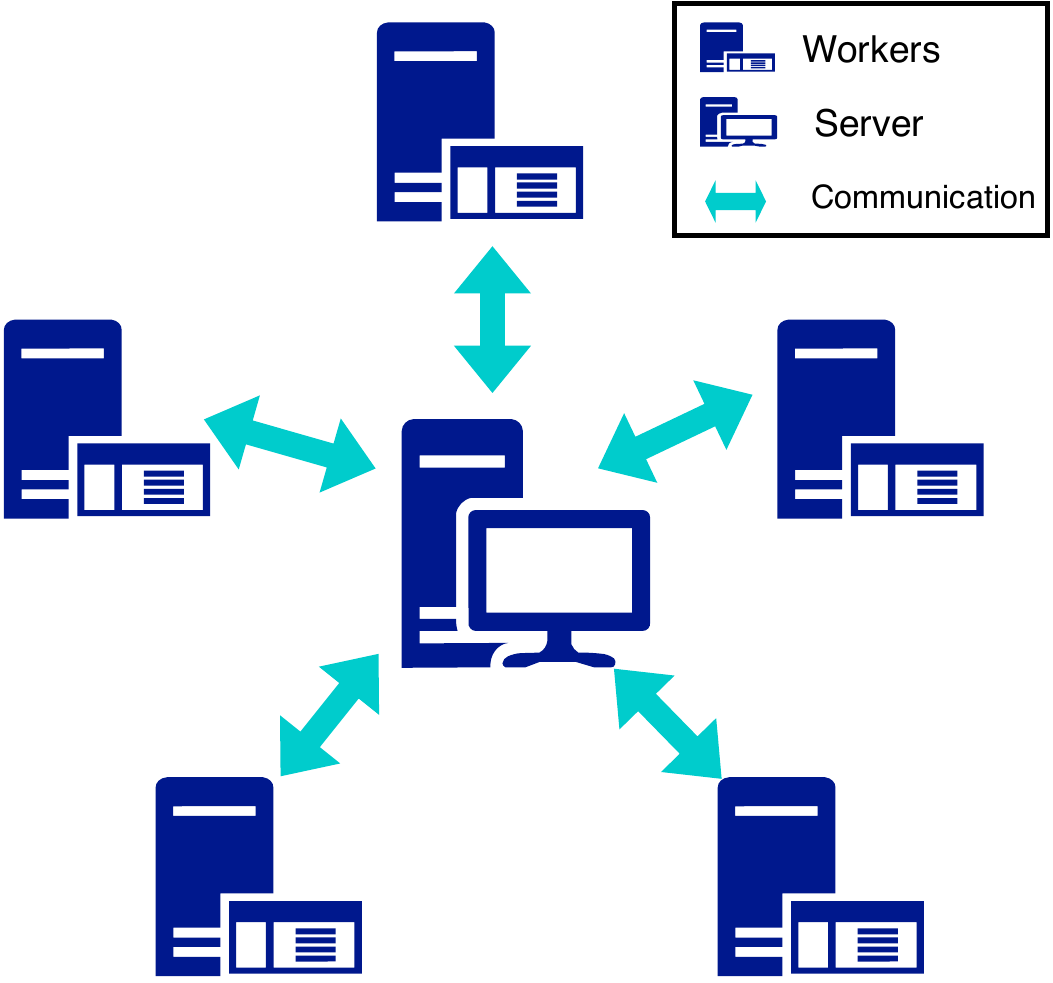}
	\caption{Parameter server structure for distributed data mining. }
	\label{sh}
\end{figure}

There are various methods trying to reduce the total running time by reducing the number of communication rounds $I(\mathcal{A}, \varepsilon)$.  Previous communication-efficient methods make workers update locally for iterations and communicate with the server periodically. For example, DSVRG \cite{lee2015distributed}, DISCO \cite{zhang2015disco}, AIDE \cite{reddi2016aide} and DANE \cite{shamir2014communication} proved that they require $O(\log \frac{1}{\varepsilon})$ communication rounds to reach $\varepsilon$-accurate solutions. There are also various distributed methods for dual problems, for example, CoCoA \cite{jaggi2014communication,smith2018cocoa}, CoCoA+ \cite{ma2015adding,smith2016cocoa} and DisDCA \cite{yang2013trading}. These methods also  admit linear convergence guarantees regarding communication rounds. 
In \cite{ma2015adding}, the authors proved that CoCoA+ is a generation for CoCoA  and showed that CoCoA+ is  equivalent to DisDCA under certain conditions. A brief comparison of distributed primal-dual methods is in Table \ref{table:methods}. These communication-efficient  methods work well when the communication time per round is relatively small and all workers run with similar speed.  
However, they suffer from the communication bottleneck $\mathcal{T}_c(d)$ when the data is of high dimensionality or straggler problem where there are machines work far slower than other normal workers. 

In  \cite{aji2017sparse,alistarh2017qsgd,strom2015scalable,lin2017deep}, authors proposed to reduce the communication time $\mathcal{T}_c(d)$ and increase the bandwidth efficiency by compressing or dropping the gradients for distributed optimization. There are also several attempts trying to quantize the gradients such that fewer messages are transmitted in the network \cite{alistarh2016qsgd,wen2017terngrad}. However, these methods are not communication-efficient. These methods ask workers to send gradient information to the server every iteration,  suffering from a large number of communication rounds. To the best of our knowledge, there is no work reducing the size of the transmitted message for distributed primal-dual methods.

In this paper, we focus on reducing the running time  at each round $ \left(\mathcal{T}_c(d) + \max\limits_k \mathcal{T}_{\mathcal{A},t}^k \right)$ for distributed data mining. To solve the issues of the straggler problem and high-dimensional data, we propose a novel straggler-agnostic and bandwidth-efficient distributed primal-dual algorithm. The main contributions of our work are summarized as follows:
\begin{itemize}
	\item   We propose a novel primal-dual algorithm to solve the straggler problem and high communication complexity per iteration in Section \ref{algo}. 
	\item We provide convergence analysis in Section \ref{ana_1} and prove that the proposed method guarantees linear convergence to the optimal solution for the convex problem. 
	\item We perform experiments with large-scale datasets distributed across multiple machines in Section \ref{exp}.  Experimental results verify that the proposed method can be up to 4 times faster than compared methods.
\end{itemize}

\begin{table}[t]
	\center
	\caption{ Communications of distributed primal-dual algorithms. $d$ denotes the size of the model, $\rho$ denotes the sparsity constant and $0<\rho \ll 1$. S-A denotes straggler agnostic.}
	\label{table:methods}
	\setlength{\tabcolsep}{3mm}
	\begin{tabular}{cccc}
		\hline 
		\textbf{Algorithm}  & \textbf{S-A} &  \textbf{$\mathcal{T}_c(d)$ }& \textbf{Communication Rounds} \\   \hline 
		DisDCA \cite{yang2013trading}& \xmark
		& $O(d)$	&  $O\left( \left(1 + \frac{1}{\lambda \mu}\right)\log(\frac{1}{\varepsilon}) \right)$   \\
		\hline
		CoCoA \cite{jaggi2014communication}& \xmark & $O(d)$ 
		&  $O \left(\left( K + \frac{1}{\lambda \mu}\right)\log(\frac{1}{\varepsilon}) \right)$ \\
		\hline
		CoCoA+ \cite{ma2015adding}& \xmark & $O(d)$  
		&  $O \left( \left(  \ 1 + \frac{1}{\lambda \mu}\right)\log(\frac{1}{\varepsilon}) \right) $\\
		\hline 
		ACPD& \cmark& $O(\rho d)$
		&  $O \left(  \left( 1 + \frac{1}{\lambda \mu} \right)\log(\frac{1}{\varepsilon}) \right) $     \\
		\hline 
	\end{tabular}
\end{table}

\section{Related Work}
\subsection{Stochastic Dual Algorithm}
In this paper, we consider to optimize the following $\ell_2$ regularized empirical loss minimization problem which is arising ubiquitously in supervised machine learning:
\begin{eqnarray}
\label{primal}
P(w) &:=& \min\limits_{w \in \mathbb{R}^d} \frac{1}{n} \sum\limits_{i=1}^n \phi_i(w^Tx_i) + \frac{\lambda}{2} \|w\|^2_2,
\end{eqnarray}
where $x_i \in \mathbb{R}^d$ denotes data sample $i$ and $w \in \mathbb{R}^d$ denotes the linear predictor to be optimized. There are many applications falling into this formulation, for example, classification problem or regression problem. 
To solve the primal problem (\ref{primal}), we can optimize its dual problem instead:
\begin{eqnarray}
D(\alpha) &:=&\max \limits_{\alpha \in \mathbb{R}^n} \frac{1}{n} \sum\limits_{i=1}^n - \phi_i^*(-\alpha_i) - \frac{\lambda}{2} \left\|\frac{1}{\lambda n} A\alpha \right\|^2_2,
\label{dual}
\end{eqnarray}
\begin{figure}[t]
	\centering
	\includegraphics[width=3.5in]{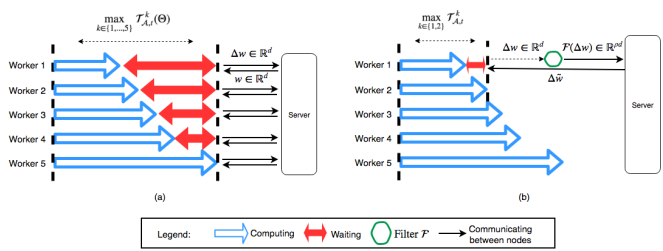}
	\caption{Communication protocols for (a) previous methods such as CoCoA, CoCoA+, and DisDCA,  (b) ACPD. \textbf{Straggler-Agnostic:} Previous methods use the synchronous communication protocol, suffering from the slowest workers in the cluster.  ACPD allows for group-wise communication between server and workers such that it avoids the straggler problem. \textbf{Bandwidth-Efficient:} Previous methods send variables $\Delta w$ with dimensionality $d$ directly through the network. ACPD inputs $\Delta w$  into a message filter and then sends compressed variable $\mathcal{F} (\Delta w)$  with dimensionality $\rho d$, where $0<\rho \ll 1$. The number of nonzero elements in $\Delta \tilde{w}$ is also of size $O(\rho d) $ on average. }
	\label{algorithm}
\end{figure}
where $\phi_i^*$ is the convex conjugate function to $\phi_i$, $A  = [x_1,x_2,...x_n] \in \mathbb{R}^{d\times n}$ denotes data matrix and $\alpha \in \mathbb{R}^n$ represents dual variables. Stochastic dual coordinate ascent (SDCA)  \cite{hsieh2008dual,shalev2013stochastic} is one of the most successful methods proposed to solve problem (\ref{primal}). In \cite{shalev2013stochastic}, the authors proved that SDCA guarantees linear convergence if the convex function $\phi_i(w^Tx_i)$ is smooth, which is much faster than stochastic gradient descent (SGD) \cite{bottou2016optimization}. At iteration $t$, given sample $i$ and  variables $\alpha_{j\neq i}$  fixed, we maximize the following subproblem:
\begin{eqnarray}
\label{dual_sub}
\max \limits_{\Delta \alpha_i \in \mathbb{R}} -\frac{1}{n}  \phi_i^*(-(\alpha_i^{t} + \Delta \alpha_i )) - \frac{\lambda}{2} \| w^{t} + \frac{1}{\lambda n}\Delta \alpha_i x_i \|^2_2.
\end{eqnarray}
$e_i$ denotes a coordinate vector of size $n$, where element $i$ is $1$ and other elements  are $0$.
Another advantage of optimizing the dual problem is that we can monitor the optimization progress by keeping track of the duality gap $G(\alpha)$. The duality gap is defined as: $G(\alpha) = P(w(\alpha)) - D(\alpha)$, where $P(w(\alpha))$ and $D(\alpha)$ denote objective values of the primal problem and the dual problem respectively. Assuming $w^*$ is the optimal solution to the primal problem (\ref{primal}), $\alpha^*$ is the optimal solution to the dual problem (\ref{dual}), the primal-dual relation is always satisfied  such that:
\begin{eqnarray}
w^*\hspace{0.2cm} = \hspace{0.2cm}  w(\alpha^*)\hspace{0.2cm}  =\hspace{0.2cm}  \frac{1}{\lambda n} A \alpha^*.
\label{primal_dual}
\end{eqnarray}

\subsection{Distributed Communication-Efficient Primal-Dual  Algorithm}
Distributed optimization methods are nontrivial when we train a data mining  problem with dataset partitioned over multiple machines. We suppose that the dataset of $n$ samples is evenly partitioned across $K$ workers. $P_k$ represents the subset of data in the worker $k \in [K] $, where  $|P_k|=n_k$ and $n = \sum_{k=1}^K n_k$. Sample $i \in P_k$ is only stored in the worker $k$, such that it cannot be sampled by  any other workers. $d$ represents the dimensionality of the dataset. In \cite{jaggi2014communication,ma2015adding}, the authors proposed communication-efficient distributed dual coordinate ascent algorithm (CoCoA) for distributed optimization of dual problem. Communication-efficient means that CoCoA allows for more computation in the worker side before communication between workers. Suppose the dataset is partitioned over $K$ machines, and all machines are doing computation simultaneously. In each iteration, workers optimize their  local subproblems $ \mathcal{G}_k^{\sigma'}(\Delta \alpha^{t}_{[k]} ; w^{t}, \alpha^t_{[k]}) $ independently as follows:

{\small
\begin{eqnarray}
\label{cocoa_sub}
&&\max\limits_{\Delta \alpha^{t}_{[k]} \in \mathbb{R}^{n}} \mathcal{G}_k^{\sigma'}\left(\Delta \alpha^{t}_{[k]} ; w^{t}, \alpha^t_{[k]}\right) \nonumber \\
& = & \max\limits_{\Delta \alpha^{t}_{[k]} \in \mathbb{R}^{n}} -\frac{1}{n} \sum\limits_{i \in P_k}  \phi_i^*\left(-\left(\alpha^t + \Delta \alpha^{t}_{[k]}\right)_i \right) - \frac{1}{K}\frac{\lambda}{2} \left\|w^{t}\right\|^2 \nonumber \\
&& -  \frac{1}{ n} \left(w^{t}\right)^T A_{[k]}  \Delta \alpha^{t}_{[k]}
- \frac{\lambda}{2} \sigma' \left\|\frac{1}{\lambda n} A_{[k]}  \Delta \alpha^{t}_{[k]} \right\|^2
\end{eqnarray}}
where $A_{[k]}$ is the data partition on worker $k$ and $\sigma'$ represents the difficulty of the given data partition. It was proved in Lemma 3 \cite{ma2015adding} that sum of local subproblems (\ref{cocoa_sub}) in $K$ workers closely approximate the global dual problem (\ref{dual}). The global variable $w$ is updated after all workers have obtained a $\Theta$-approximate solution to their local subproblems.  Authors in \cite{ma2015adding} claimed that CoCoA shows significant speedups over previous state-of-the-art methods on large-scale distributed datasets. However, the synchronous communication protocol makes CoCoA vulnerable to slow or dead workers. Suppose the normal workers spend $10$ seconds  completing their computation task while a slow worker needs $60$ seconds. In each iteration,  all normal workers have to wait $50$ seconds for the slow worker, which is a tremendous waste of computation resource.

\section{Straggler-Agnostic and Bandwidth-Efficient Distributed Primal-Dual Algorithm}
\label{algo}
In this section, we propose a novel  Straggler-{A}gnostic and {B}andwidth-Efficient Distributed {P}rimal-{D}ual Algorithm (ACPD)  for the high-dimensional data. 


\subsection{ Straggler-Agnostic Server}
As shown in Figure \ref{algorithm}, previous distributed primal-dual methods need to collect information from all workers before updating, suffering from the straggler problem if there are straggler problems. Running time per iteration is entirely dependent on the slowest workers. We overcome the straggler problem by allowing server to update the model as long as a group of workers has been received. 
For example, in Figure \ref{algorithm}, the server just needs to receive messages from two workers.  
The server keeps a model update variable $\Delta \tilde{w}_k$ for each worker, which stores the update of the server model between two communication iterations of worker $k$. After updating the variables on the server, it sends the model update variable  $\Delta \tilde{w}_k$ to  the corresponding workers for further computation. 

\begin{algorithm}[t]
	\begin{algorithmic}[1]
		\STATE \textbf{Initialize: }  Global model: $\tilde w^{0} \in \mathbb{R}^d  = 0$;\\
		Model update: $\Delta \tilde w_k \in \mathbb{R}^d = 0 $;\\ 
		Condition1: $|\Phi| < B \hspace{0.1cm}  and \hspace{0.1cm} t<T-1$;\\
		Condition2: $|\Phi| < K \hspace{0.1cm}  and \hspace{0.1cm} t==T-1$;
		\FOR {$l = 0, 1,..., L-1$}			
		\STATE Update $w^0 = \tilde{w}^l;$
		\FOR {$t= 0,1,... , T-1$}
		\STATE Empty workers set $\Phi = \emptyset$;
		\WHILE { Condition1 or Condition2}
		\STATE Receive  $ \mathcal{F} (\Delta {w}_k )$ from worker $k$, add $k$ to set $\Phi$;
		\STATE Update $\Delta \tilde w_i = \Delta \tilde w_i + \gamma \mathcal{F} ( \Delta {w}_k )$; 
		\ENDWHILE
		\STATE Update {\small   $w^{t+1} = w^{t} + \gamma \sum_{k\in \Phi} \mathcal{F} ( \Delta w_k )$ };
		\STATE Send  $ \Delta \tilde w_k $ to worker $k$ and set $\Delta \tilde w_k = 0$ if $k \in \Phi$;
		\ENDFOR
		\STATE Update $\tilde{w}^{l+1} = w^{T}$;
		\ENDFOR
	\end{algorithmic}
	\caption{Straggler-Agnostic Server}
	\label{alg_server}
\end{algorithm}
Additionally, we also need to control the gap between workers as update information from slow workers may lead to divergence \cite{zhang2016fixing}. Because of our group-wise communication protocol, the local models on workers are usually of different timestamps. It could severely degrade the performance of the method if local models are too stale. To solve this problem, we make the server to collect information from all workers every $T$ iterations, such that all workers are guaranteed to be received at least once within $T$ iterations. Thus, the maximum time delay between local models is  bounded by $T$. A brief description of the procedures in the server is in Algorithm \ref{alg_server}.

\subsection{Bandwidth-Efficient Worker}
Workers are responsible for most of the complicated computations.  There are $n_k$ data on worker $k$, and it is denoted using $A_{[k]}$.  Because of our group-wise communication, we assume the probability of worker $k$ to be received by the server is $q_k$. In each iteration, worker $k$ solves the local subproblem and obtains an approximate solution $\Delta \alpha_{[k]}$, and then it sends the filtered variable $\mathcal{F}(\Delta w_k)$ to the server. Finally, workers receive the global model from the server for further computation.
\begin{algorithm}[t]
	\begin{algorithmic}[1]
		\STATE \textbf{Initialize: } Local model: $w_k \in  \mathbb{R}^d = 0$; \\
		Model update: $\Delta w_k  \in  \mathbb{R}^d = 0$;\\
		Local dual variable: $\alpha_{[k]} \in  \mathbb{R}^{n} = 0$;\\
		\REPEAT
		\STATE $\Delta \alpha_{[k]} = 0$;
		\STATE  Solve subproblem for $H$ iterations and output $\Delta \alpha_{[k]} $:\\
		\hspace{ 0.5cm} $\mathcal{G}_k^{\sigma'}(\Delta \alpha_{[k]} ; w_k + \gamma  \Delta w_k, \alpha_{[k]}) ;$
		\STATE  Update $\alpha_{[k]} =  \alpha_{[k]} + \gamma \Delta \alpha_{[k]} $;
		\STATE Update $\Delta w_k = \Delta w_k +  \frac{1}{\lambda n} A_{[k]} \Delta \alpha_{[k]}$;
		\STATE  {\small Find the $\rho d_{th}$ largest values  in $|\Delta w_k |$ as $c_k$};
		\STATE  Update mask $\mathcal{M}_k = |\Delta w_k| \geq c_k $;
		\STATE Send $ \mathcal{F}(\Delta w_k) =  \Delta w_k \circ \mathcal{M}_k $  to server;
		\STATE Compute  {\small $ \Delta \hat \alpha_{[k]} =
			\lambda n A_{[k]}^{-1} \left( \Delta w_k \circ \neg \mathcal{M}_k\right) $};
		\STATE  Update $\alpha_{[k]} = \alpha_{[k]} - \gamma \Delta \hat \alpha_{[k]}  $; 
		\STATE  Update $\Delta w_k = 0$;
		\STATE  Receive  $\Delta \tilde w_k$ from the server;
		\STATE  Update $ w_k = w_k + \Delta \tilde{w}_k $;
		\UNTIL {convergence}
	\end{algorithmic}
	\caption{Bandwidth-Efficient Worker $k$}
	\label{alg_worker}
\end{algorithm}

\subsubsection{Subproblem  in  Worker $k$}
At first,  worker $k$ finds an approximate solution $\Delta \alpha_{[k]}$ to the local subproblem as follows:
\begin{eqnarray}
\max\limits_{\Delta \alpha_{[k]} \in \mathbb{R}^{n}} \mathcal{G}_k^{\sigma'}\left(\Delta \alpha_{[k]} ; w_k, \alpha_{[k]}\right),
\label{sub}
\end{eqnarray}
where $\sigma' := \gamma B$ and $\mathcal{G}_k^{\sigma'}\left(\Delta \alpha_{[k]} ; w_k, \alpha_{[k]}\right)$ is defined as:
{\small
	\begin{eqnarray}\mathcal{G}_k^{\sigma'}\left(\Delta \alpha_{[k]} ; w_k, \alpha_{[k]}\right):=
	\frac{1}{q_iK}\biggl(  -\frac{1}{n} \sum\limits_{i \in P_k}  \phi_i^*\left(-\left(\alpha + \Delta \alpha_{[k]}\right)_i\right)  
	\nonumber \\
	- \frac{1}{K}\frac{\lambda}{2} \left\|w_k\right\|^2 	- \frac{\lambda}{2} \sigma' \left\|\frac{1}{\lambda n} A_{[k]} \Delta \alpha_{[k]}\right\|^2  -  \frac{1}{ n} \left(w_k\right)^T A_{[k]}  \Delta \alpha_{[k]} 
	\biggr).
	\label{subproblem}
	\end{eqnarray}
}
At each iteration, we sample $i$ randomly  from $P_k$ and compute $\Delta \alpha_i$  supposing other variables $\alpha_{j, j \neq i}$ are fixed.  We repeat this procedure for $H$ iterations. In Algorithm \ref{alg_worker}, $H$ represents the number of iterations before communication, controlling the trade-off between computation and communication. 
There are many fast solvers for the dual problem (\ref{sub}), such as Stochastic Dual Coordinate Ascent (SDCA) \cite{shalev2013stochastic} and  Accelerated Proximal Stochastic Dual Coordinate Ascent (Accelerated Prox-SDCA)\cite{shalev2014accelerated}. Sampling techniques can also be used to improve the convergence of the local solver, such as importance sampling \cite{zhang2015stochastic} and adaptive sampling \cite{qu2015stochastic}. In this paper, we only consider SDCA with uniform sampling as the local solver.

\subsubsection{Sparse Communication on Worker $k$}
After getting an approximate solution $\Delta \alpha_{[k]}$ to the subproblem (\ref{sub}) after $H$ iterations, we compute update for primal variables using:
\begin{eqnarray}
\Delta w_k &=& \Delta w_k + \frac{1}{\lambda n} A_{[k]} \Delta \alpha_{[k]}.
\end{eqnarray}
As shown in Figure \ref{algorithm}, previous methods used to send $	\Delta w_k \in \mathbb{R}^d$ directly to the server. 
However, when the dimension of data is  large, sending and receiving variables with a full dimensionality between server and workers are time-consuming.  
On the contrary, ACPD requires workers to input $\Delta w_k$ into the filter at first and sends the filtered variable $\mathcal{F}(\Delta w_k) \in \mathbb{R}^{\rho d}$ to the server. We implement the filter by simply selecting the elements whose absolute values are the top  $\rho d$ largest. $\mathcal{M}_k(i)=1$ as long as $|\Delta w_k(i)| \geq c_k $ and otherwise $\mathcal{M}_k(i)=0$.  In this way, major update information is kept in the filtered variables, and less communication bandwidth is required. We can easily compress a sparse vector by storing locations and values of the elements. For the purpose of theoretical analysis, at lines $10$-$12$, we put the filtered out update information  back to the local dual variables $\alpha_{[k]}$. In the end, the worker $k$ receives model update variable $\Delta \tilde w_k$ from the server. The communication time is also $q_i O(\frac{1}{q_i} \rho d) = O(\rho d )$ on average. A brief summarization of the procedures on workers is in Algorithm \ref{alg_worker}. 

Lines $10$-$12$ in Algorithm \ref{alg_worker} keep the primal-dual relation (\ref{primal_dual}) always satisfied at each iteration, which is nontrivial for theoretical analysis. However,  matrix inversion computation at each iteration is not practical.  In practice, we simply replace lines 10-12 with: $\Delta w_k \leftarrow \Delta w_k \circ \neg \mathcal{M}_k$, where $\circ$ denotes element-wise multiplication. In the experiments, we show that this simplification does not affect the convergence empirically.

\section{Convergence Analysis}
\label{ana_1}
In this section, we analyze the convergence properties for the proposed method and prove that it guarantees linear convergence to the optimal solution under certain conditions. 
Because of the group-wise communication, at iteration $t$,  local variable $w_k$ in the worker $k$ equals to $w^{d_k(t)} $ from the server, where $w^{d_k(t)}$ denotes stale global variable $w$ with time stamp $d_k(t)$. $\alpha^t_{[k]}$ is local dual variable, where $\alpha_i = 0$ if $i \notin P_k$. Because data $i  \in P_k$ can only be sampled in the worker $k$, it is always true that $\alpha_{[k]}^t = \alpha^{d_k(t)}_{[k]}$.  Therefore, the subproblem in the worker $k$ at iteration $t$ can be written as $\mathcal{G}_k^{\sigma'}(\Delta \alpha^{t}_{[k]} ; w^{d_k(t)}, \alpha^t_{[k]})$. 
Firstly, we suppose that following Assumptions are satisfied throughout the paper:
\begin{assumption}
	We assume all convex $\phi_i$  are non-negative and it holds that $\phi_i(0)\leq 1, \forall i \in [n] $. All samples $x_i$ are normalized such that:
$
	\|x_i\|_2^2 \leq 1.
$
\end{assumption}

\begin{assumption}
	Function $\phi_i$ is $\frac{1}{\mu}$-smooth $\forall i \in [n]$,   such that $\forall a, b \in \mathbb{R}^d$: 
$
	\|\nabla \phi_i(a) - \nabla \phi_i(b)\| \leq \frac{1}{\mu} \|a-b\| \hspace{0.5cm}. 
$
\end{assumption}

\begin{assumption}
	Time delay on worker $k$ is upper  bounded  that:
$
	t - d_k(t) \leq \tau, \hspace{0.5cm} \forall t\geq 0.
$
	In our algorithm, we have $\tau \leq T-1$.
\end{assumption}

For any local solver in the workers,  we assume that the result of the subproblem after $H$ iterations is an approximation of the optimal solutions:
\begin{assumption}
	\label{ass_1}
	We define $\Delta \alpha_{[k]}^*$ to be the optimal solution to the local subproblem $\mathcal{G}_k^{\sigma'}(\Delta \alpha^{t}_{[k]} ; w^{d_k(t)}, \alpha^t_{[k]})$ on worker $k$.  For each subproblem, there exists a constant $\Theta \in [0,1)$ such that:
	{\small	\begin{eqnarray}
		\mathbb{E}_k \left[\mathcal{G}_k^{\sigma'} \left(\Delta \alpha_{[k]}^*; w^{d_k(t)}, \alpha^t_{[k]}\right) - \mathcal{G}_k^{\sigma'} \left(\Delta \alpha^{t}_{[k]}; w^{d_k(t)}, \alpha^t_{[k]}\right)   \right]  \nonumber \\
		\leq \Theta \left(\mathcal{G}_k^{\sigma'} \left(\Delta \alpha_{[k]}^*;w^{d_k(t)}, \alpha^t_{[k]}\right)  - \mathcal{G}_k^{\sigma'} \left(0; w^{d_k(t)}, \alpha^t_{[k]}\right)  \right).
		\end{eqnarray}
	}
\end{assumption}
All above assumptions are commonly used  in \cite{shalev2013accelerated,shalev2013stochastic}. Based on these assumptions, we  analyze the convergence rate of the proposed method.  At first, we analyze the optimization path of the proposed method  at each iteration.

\begin{lemma}
	\label{lem2}
	Suppose all assumptions are satisfied,  $\phi_i$ is $\frac{1}{\mu}$-smooth and convex. $w^t$ and $\alpha^t$  are computed according to  Algorithm \ref{alg_server} and \ref{alg_worker}. Let $\gamma \geq 0$ and $s\in [0,1]$. We define
	$-u_i^t \in \partial \phi_i\left((w^t)^Tx_i\right) $ and $R^t = \frac{\sigma'}{2\lambda} \left(\frac{s}{n}\right)^2\sum\limits_{k=1}^K \left\|A_{[k]} (u^t - \alpha^t)_{[k]} \right\|^2 -\frac{\mu}{2} \left(1-s\right)s \left\|u_i^t - \alpha_i^t\right\|^2,$, if $G(\alpha^t) = 	P(w^t) - D(\alpha^t)$,
	Then following inequality holds that:
	\begin{eqnarray}
	\mathbb{E} \biggl[D(\alpha^t) - D(\alpha^{t+1}) \biggr]  	\leq -\frac{B}{K}\gamma s(1-\Theta) G(\alpha^t)  \nonumber\\
	+   \frac{B}{K}\gamma (1-\Theta) R^t  	+  Q_7.
	\label{20003}
	\end{eqnarray}
	where {\tiny $Q_7= \frac{\gamma^2B}{\lambda nK} \left(  s (1-\Theta)   + 1\right) \sum\limits_{j=d_k(t)}^{t-1} \sum\limits_{i=1}^n \left\|u_i^{d(j)} - \alpha_i^{d(j)}\right\|\left\|u_i^t - \alpha_i^t\right\|$}.
\end{lemma}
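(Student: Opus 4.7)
The plan is to follow the standard CoCoA+/DisDCA one-step descent analysis, but with two adaptations: (i) the server accepts only a group of $B$ workers per round (so the stepwise improvement carries a factor $B/K$ via the $q_k$-reweighting built into $\mathcal{G}_k^{\sigma'}$), and (ii) the subproblem on worker $k$ is built around the stale primal iterate $w^{d_k(t)}$ rather than $w^t$, which will produce the delay residual $Q_7$. Lines $10$--$12$ of Algorithm 2 are exactly the device that keeps the primal--dual relation $w = \tfrac{1}{\lambda n} A\alpha$ intact in the analysis, so the per-step change $\alpha^{t+1} - \alpha^t$ can still be written as $\gamma \Delta \alpha_{[k]}^t e_{[k]}$ summed over the accepted group, and I can apply the existing dual-subproblem algebra without worrying about the sparsifier distorting the identity.

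The first step is to express
\[
D(\alpha^{t+1}) - D(\alpha^t) \;\geq\; \gamma \sum_{k \in \Phi}\bigl(\mathcal{G}_k^{\sigma'}(\Delta\alpha_{[k]}^t;w^{d_k(t)},\alpha^t_{[k]}) - \mathcal{G}_k^{\sigma'}(0;w^{d_k(t)},\alpha^t_{[k]})\bigr) - E_{\text{delay}},
\]
where $E_{\text{delay}}$ captures the gap between evaluating the quadratic upper model at $w^{d_k(t)}$ versus $w^t$. Taking expectation over the server's selection, the reweighting by $1/(q_k K)$ inside $\mathcal{G}_k^{\sigma'}$ turns the sum over $\Phi$ into $(B/K)$ times a sum over all $K$ workers. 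Then I use Assumption 4 to replace $\Delta\alpha_{[k]}^t$ by the optimizer $\Delta\alpha_{[k]}^*$ at the price of a factor $(1-\Theta)$, which lower-bounds the right-hand side by $(1-\Theta)$ times the subproblem improvement at any feasible probe.

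The second step is the standard probe $\Delta\alpha_{[k]} = s(u^t - \alpha^t)_{[k]}$ with $s \in [0,1]$. Plugging this into \eqref{subproblem} and using $\mu$-strong convexity of $\phi_i^*$ (dual of the $1/\mu$-smooth $\phi_i$) together with the supporting inequality $\phi_i^*(-u_i^t) + \phi_i((w^t)^T x_i) = -u_i^t (w^t)^T x_i$ coming from $-u_i^t \in \partial \phi_i((w^t)^T x_i)$, I obtain
\[
\mathcal{G}_k^{\sigma'}(s(u^t-\alpha^t)_{[k]};w^t,\alpha_{[k]}^t) - \mathcal{G}_k^{\sigma'}(0;w^t,\alpha_{[k]}^t) \;\geq\; \tfrac{1}{q_k K}\bigl(s\, G_k(\alpha^t) - R_k^t\bigr),
\]
where $G_k$ and $R_k^t$ are the per-worker pieces of $G(\alpha^t)$ and $R^t$. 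Summing over $k$ and multiplying by the $\gamma(1-\Theta) B/K$ prefactor yields the first two terms of \eqref{20003}.

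The main obstacle, and the reason $Q_7$ appears, is controlling $E_{\text{delay}}$: the probe bound above was written at $w^t$, whereas the subproblem is evaluated at $w^{d_k(t)}$. I would expand
\[
w^t - w^{d_k(t)} \;=\; \tfrac{\gamma}{\lambda n}\sum_{j=d_k(t)}^{t-1}\sum_{i=1}^n \Delta\alpha_i^{d(j)} x_i,
\]
and observe that the only places $w$ enters $\mathcal{G}_k^{\sigma'}$ are the linear term $-(w)^T A_{[k]}\Delta\alpha_{[k]}/n$ and (after the probe substitution) the cross-term produced when relating $\|A_{[k]}(u-\alpha)_{[k]}\|$ at $w^{d_k(t)}$ to the duality gap at $w^t$. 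Bounding each stale increment by $\|\Delta\alpha_i^{d(j)}\| \lesssim \|u_i^{d(j)} - \alpha_i^{d(j)}\|$ (a standard SDCA identity under Assumption 2), applying Cauchy--Schwarz together with $\|x_i\|_2 \leq 1$ from Assumption 1, and collecting the two contributions, yields exactly the double sum in $Q_7$ with prefactor $\tfrac{\gamma^2 B}{\lambda n K}(s(1-\Theta)+1)$: the $+1$ comes from the linear primal cross-term and the $s(1-\Theta)$ from the probed quadratic piece after the Assumption 4 reduction. Combining these three steps and rearranging gives \eqref{20003}.
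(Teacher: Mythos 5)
Your proposal follows essentially the same route as the paper's proof: the one-step dual improvement written through the reweighted local subproblems with the $B/K$ group-selection factor (the paper's Lemma~\ref{lem1}), the $\Theta$-approximation reduction to the local optimizers, the probe $s(u^t-\alpha^t)$ combined with strong convexity of $\phi_i^*$ and the conjugate--subgradient identity to produce $-sG(\alpha^t)+R^t$, and finally the staleness expansion of $w^t-w^{d_k(t)}$ with Cauchy--Schwarz and $\|x_i\|\leq 1$ to collect $Q_7$. Your attribution of the two pieces of $Q_7$'s prefactor (the $+1$ from the linear cross-term of the stale subproblem, the $s(1-\Theta)$ from the probed delay cross-term) matches the paper's $Q_3$ and $Q_6$ terms, so the argument is correct and not materially different.
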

According Lemma \ref{lem2}, we analyze convergence rate of the proposed method as follows. 
\begin{theorem}
	\label{them1}
	Suppose that all assumptions are satisfied. We set $\alpha^0=0$ and  $\sigma_{max} \geq  \sigma_k =  \max\limits_{\alpha_k \in \mathbb{R}^n}\frac{ \|A_{[k]} \alpha_{[k]} \|^2}{ \| \alpha_{[k]}\|^2 } $. Through Algorithm \ref{alg_server} and \ref{alg_worker},  we can obtain duality sub-optimality:
	\begin{eqnarray}
	\mathbb{E}_k \left[ D\left(\alpha^*\right) - D(\alpha^L)\right] &\leq& \varepsilon_D,
	\end{eqnarray}
	after $L$ outer iterations as long as:
	\begin{eqnarray}
	L & \geq& \frac{K}{B\gamma (1-\Theta)s} \log \biggl( \frac{1}{\varepsilon_D } \biggr),
	\end{eqnarray}
	where $s = \frac{\lambda \mu n - 2\gamma n (T-1) + \sqrt{\Delta}}{2(\sigma' \sigma_{\max} + \lambda \mu n)}$ and $\Delta = (2\gamma n(T-1) - \lambda \mu n)^2 - \frac{8\gamma n(T-1)}{1-\Theta} ({\sigma'\sigma_{\max}} + \lambda \mu n)$.
\end{theorem}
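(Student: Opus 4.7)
The plan is to convert the per-iteration bound from Lemma~\ref{lem2} into a geometric contraction for the dual sub-optimality and then iterate it. By weak duality together with the primal--dual relation \eqref{primal_dual}, we have $G(\alpha^t)\geq P(w^*)-D(\alpha^t)=D(\alpha^*)-D(\alpha^t)$, so the duality-gap term in \eqref{20003} lower-bounds the current sub-optimality. After rearrangement, \eqref{20003} yields
\begin{equation*}
\mathbb{E}\bigl[D(\alpha^*)-D(\alpha^{t+1})\bigr] \leq \Bigl(1-\tfrac{B\gamma s(1-\Theta)}{K}\Bigr)\bigl(D(\alpha^*)-D(\alpha^t)\bigr) + \tfrac{B\gamma(1-\Theta)}{K}R^t + Q_7,
\end{equation*}
and the remaining task is to choose $s\in(0,1]$ so that the two positive correction terms cancel on average.

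The next step is to bound $R^t$. The inequality $\|A_{[k]}v\|^2\leq \sigma_{\max}\|v\|^2$ turns the first, positive piece of $R^t$ into $\tfrac{\sigma'\sigma_{\max}}{2\lambda}(s/n)^2\|u^t-\alpha^t\|^2$, while taking expectation over the uniformly sampled coordinate $i$ converts the negative piece into $-\tfrac{\mu(1-s)s}{2n}\|u^t-\alpha^t\|^2$. Matching coefficients of $\|u^t-\alpha^t\|^2$ and demanding that the combined contribution be non-positive already yields a quadratic condition $(\sigma'\sigma_{\max}+\lambda\mu n)s\leq \lambda\mu n$ in the synchronous case; the asynchronous correction from $Q_7$ only perturbs this inequality.

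The main obstacle is handling $Q_7$, which couples $\|u^t-\alpha^t\|$ to every stale deviation $\|u^{d(j)}-\alpha^{d(j)}\|$ with $j\in[d_k(t),t-1]$. I would apply $\|a\|\|b\|\leq \tfrac12(\|a\|^2+\|b\|^2)$ to every cross term and invoke Assumption~3 ($t-d_k(t)\leq T-1$) to cap the number of stale terms by $T-1$. When these per-iteration bounds are telescoped across $t$, each stale index appears in at most $T-1$ windows, promoting $Q_7$ into an additional $O(\gamma n(T-1)s)$ contribution to the $s$-quadratic. Collecting everything gives
\begin{equation*}
(\sigma'\sigma_{\max}+\lambda\mu n)s^2 + \bigl(2\gamma n(T-1)-\lambda\mu n\bigr)s + \tfrac{2\gamma n(T-1)}{1-\Theta}(\sigma'\sigma_{\max}+\lambda\mu n) \leq 0,
\end{equation*}
whose positive root is exactly the $s$ stated in the theorem, with discriminant $\Delta$.

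With this choice of $s$, the recursion collapses to $\mathbb{E}[D(\alpha^*)-D(\alpha^{t+1})] \leq \bigl(1-\tfrac{B\gamma s(1-\Theta)}{K}\bigr)\,\mathbb{E}[D(\alpha^*)-D(\alpha^t)]$. Starting from $D(\alpha^*)-D(\alpha^0)\leq P(0)\leq 1$ (using $\alpha^0=0$ together with Assumption~1), unrolling $L$ times and applying $\log(1-x)\leq -x$ produces the stated lower bound on $L$. The trickiest piece is the telescoping bookkeeping for $Q_7$; once it is set up, the quadratic-in-$s$ step and the final unrolling are routine.
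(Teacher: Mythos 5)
Your proposal follows essentially the same route as the paper's proof: bound $R^t$ via $\sigma_{\max}$, rearrange Lemma~\ref{lem2} into a contraction on $D(\alpha^*)-D(\alpha^t)$, absorb $Q_7$ by summing over a window of $T$ iterations and using the delay bound, force the resulting quadratic in $s$ to be non-positive, and unroll with $\varepsilon_D^0\leq 1$. The only blemish is a small algebraic slip in your quadratic's constant term, which should be $\tfrac{2\gamma n(T-1)}{1-\Theta}$ rather than $\tfrac{2\gamma n(T-1)}{1-\Theta}(\sigma'\sigma_{\max}+\lambda\mu n)$; with that correction the positive root and discriminant match the stated $s$ and $\Delta$ exactly.
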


\begin{proof}
	For $\mu>0$, we get upper bound of  $R^t$ such that:
	{\small
		\begin{eqnarray}
		&R^t=&  \frac{\sigma'}{2\lambda} \left(\frac{s}{n}\right)^2\sum\limits_{k=1}^K \|A_{[k]} (u^t - \alpha^t)_{[k]} \|^2 \nonumber \\
		&& - \frac{\mu}{2} (1-s)s\frac{1}{n}\sum\limits_{i=1}^n(u_i^t - \alpha_i^t)^2 \nonumber \\
		&	\leq& \frac{\sigma'}{2\lambda} \left(\frac{s}{n}\right)^2 \sum\limits_{k=1}^K \sigma_k\|(u^t-\alpha^t)_{[k]}\|^2 - \frac{\mu s(1-s)}{2n} \|u^t-\alpha^t\|^2   \nonumber \\
		&	\leq  &\biggl( \frac{\sigma'}{2\lambda} \left(\frac{s}{n}\right)^2 \sigma
		_{max} -  \frac{\mu s(1-s)}{2n} \biggr)\|u^t-\alpha^t\|^2, 
		\end{eqnarray}
	}
	where $\sigma_{max} \geq  \sigma_k =  \max\limits_{\alpha_k \in \mathbb{R}^n}\frac{ \|A_{[k]} \alpha_{[k]} \|^2}{ \| \alpha_{[k]}\|^2 } $. According to  Lemma \ref{lem2}, we have: 
	{ \small
	\begin{eqnarray}
	\label{iq_2001}
	&& \mathbb{E}_k \left[D(\alpha^t) - D(\alpha^{t+1}) \right] \nonumber \\
	&\leq& -\frac{B}{K}\gamma s(1-\Theta) G(\alpha^t) + \frac{B}{K}\gamma (1-\Theta) R_t +Q_7  \\
	&\leq& -\frac{B}{K}\gamma s(1-\Theta) (D(\alpha^*) - D(\alpha^t))  + \frac{B}{K}\gamma (1-\Theta) R_t +Q_7 \nonumber,
	\end{eqnarray}}
	where the last inequality follows from that $G(\alpha^t) =	P(w^t) - D(\alpha^t) \geq D(\alpha^*) - D(\alpha^t) $. Then we can get the upper bound of $\mathbb{E}_k \left[D(\alpha^*) - D(\alpha^{t+1})\right]$ as follows:
{\small	\begin{eqnarray}
	\label{ineq_dual_op}
	&\mathbb{E}_k \left[D(\alpha^*) - D(\alpha^{t+1})\right] \nonumber \\
	=& \mathbb{E}_k \left[D(\alpha^*) - D(\alpha^{t+1}) + D(\alpha^t) -D(\alpha^t) \right] \nonumber \\
	\leq&  \left(1-\frac{B}{K}\gamma s (1-\Theta) \right) \left(D(\alpha^*) - D(\alpha^{t})\right) +  \frac{B}{K}\gamma (1-\Theta)R_t +Q_7 \nonumber \\
	 =&   \left(1-\frac{B}{K}\gamma s (1-\Theta) \right) \left(D(\alpha^*) - D(\alpha^{t})\right) \nonumber \\
	&+ \left( \frac{ \gamma^2 s (1-\Theta)B}{  \lambda n K }  + \frac{ \gamma^2 B}{  \lambda n K }\right) \sum\limits_{j=d_k(t)}^{t-1} \sum\limits_{i=1}^n \left\|u_i^{d(j)} - \alpha_i^{d(j)}\right\|\left\|u_i^t - \alpha_i^t\right\|\nonumber\\
	& + \frac{B}{K}\gamma (1-\Theta) \biggl( \frac{\sigma'\sigma_{max}}{2\lambda} \left(\frac{s}{n}\right)^2  -  \frac{\mu s(1-s)}{2n} \biggr)\|u^t-\alpha^t\|^2.
	\end{eqnarray} }
	Summing up the above inequality from $t=0$ to $t=T-1$, we have:
	\begin{eqnarray}
	&& \sum\limits_{t=0}^{T-1} \mathbb{E}_k[D(\alpha^*) - D(\alpha^{t+1})]  \nonumber \\
	&\leq &  \sum\limits_{t=0}^{T-1} \left(1-\frac{B}{K}\gamma s (1-\Theta) \right) (D(\alpha^*) - D(\alpha^{t})) \nonumber \\
	&& + \biggl( \frac{B}{K}\gamma (1-\Theta) \biggl( \frac{\sigma'\sigma_{max}}{2\lambda} \left(\frac{s}{n}\right)^2  -  \frac{\mu s(1-s)}{2n} \biggr)  \\
	&& + \left( \frac{ \gamma^2 s (1-\Theta)B}{  \lambda n K }  + \frac{ \gamma^2B }{  \lambda n K }\right)
	\left(T-1\right)  \biggr) \sum\limits_{t-0}^{T-1}\|u^t-\alpha^t\|^2.\nonumber
	\end{eqnarray}
	Therefore, 	as long as, 
	\begin{eqnarray}
	\biggl( \frac{B}{K}\gamma (1-\Theta) \biggl( \frac{\sigma' \sigma_{\max}}{2\lambda} \left(\frac{s}{n}\right)^2  -  \frac{\mu s(1-s)}{2n} \biggr) \nonumber \\
	+ \left( \frac{ \gamma^2 s (1-\Theta)B}{  \lambda n K }  + \frac{ \gamma^2B }{  \lambda n K }\right)
	\left(T-1\right)  \biggr) \leq 0,
	\label{aaaa}
	\end{eqnarray}
	the following inequality holds that:
	{\small
	\begin{eqnarray}
	\label{final_them}
	\mathbb{E}[D(\alpha^*) - D(\alpha^{T})] \leq \left(1-\frac{B}{K}\gamma s (1-\Theta) \right)  \left( D(\alpha^*) - D(\alpha^0)\right) .
	\end{eqnarray}}
	As per our algorithm, we have $D(\tilde\alpha^{l+1}) = D(\alpha^{T})$. Applying (\ref{final_them}) recursively from $l=0$ to $l=L-1$, it holds that:
$
	\mathbb{E}[D(\alpha^*) - D(\tilde\alpha^{L})] \leq \left(1-\frac{B}{K}\gamma s (1-\Theta) \right)^L  \left( D(\alpha^*) - D(\tilde\alpha^0)\right). 
$
	Letting $s = \frac{\lambda \mu n - 2\gamma n (T-1) + \sqrt{\Delta}}{2(\sigma' \sigma_{\max} + \lambda \mu n)}$, where $\Delta = (2\gamma n(T-1) - \lambda \mu n)^2 - \frac{8\gamma n(T-1)}{1-\Theta} ({\sigma'\sigma_{\max}} + \lambda \mu n)$. It is easy to note that when $\gamma \rightarrow 0$, $\Delta > 0$ and $0<s<1$. Therefore, there exists $\gamma>0$ that (\ref{aaaa}) always holds.
	Setting $\varepsilon_D^l = D(\alpha^*) - D(\tilde\alpha^l)$, we have:
	\begin{eqnarray}
	\mathbb{E}[\varepsilon_D^L] &\leq & (1-\frac{B}{K}\gamma s(1-\Theta) )^L \varepsilon_D^0 \nonumber \\
	&\leq & e^{-\frac{B}{K}\gamma s(1-\Theta)L},
	\end{eqnarray} 
	where the second inequality follows from  $(1-\frac{B}{K}\gamma s(1-\Theta))^L \leq e^{-\frac{B}{K}\gamma s(1-\Theta)L}$ and $\varepsilon_D^0 \leq 1$ from Assumption 1.
	Therefore, if the sub-optimality of dual problem  has an upper bound $\varepsilon_D$, $L$ must be bounded that:
	\begin{eqnarray}
	L &\geq & \frac{K}{B\gamma (1-\Theta)s} \log \biggl( \frac{1}{\varepsilon_D } \biggr).
	\end{eqnarray}
\end{proof}

\begin{figure*}[h]
	\centering
	\begin{subfigure}[b]{0.24\textwidth}
		\centering
		\includegraphics[width=1.9in]{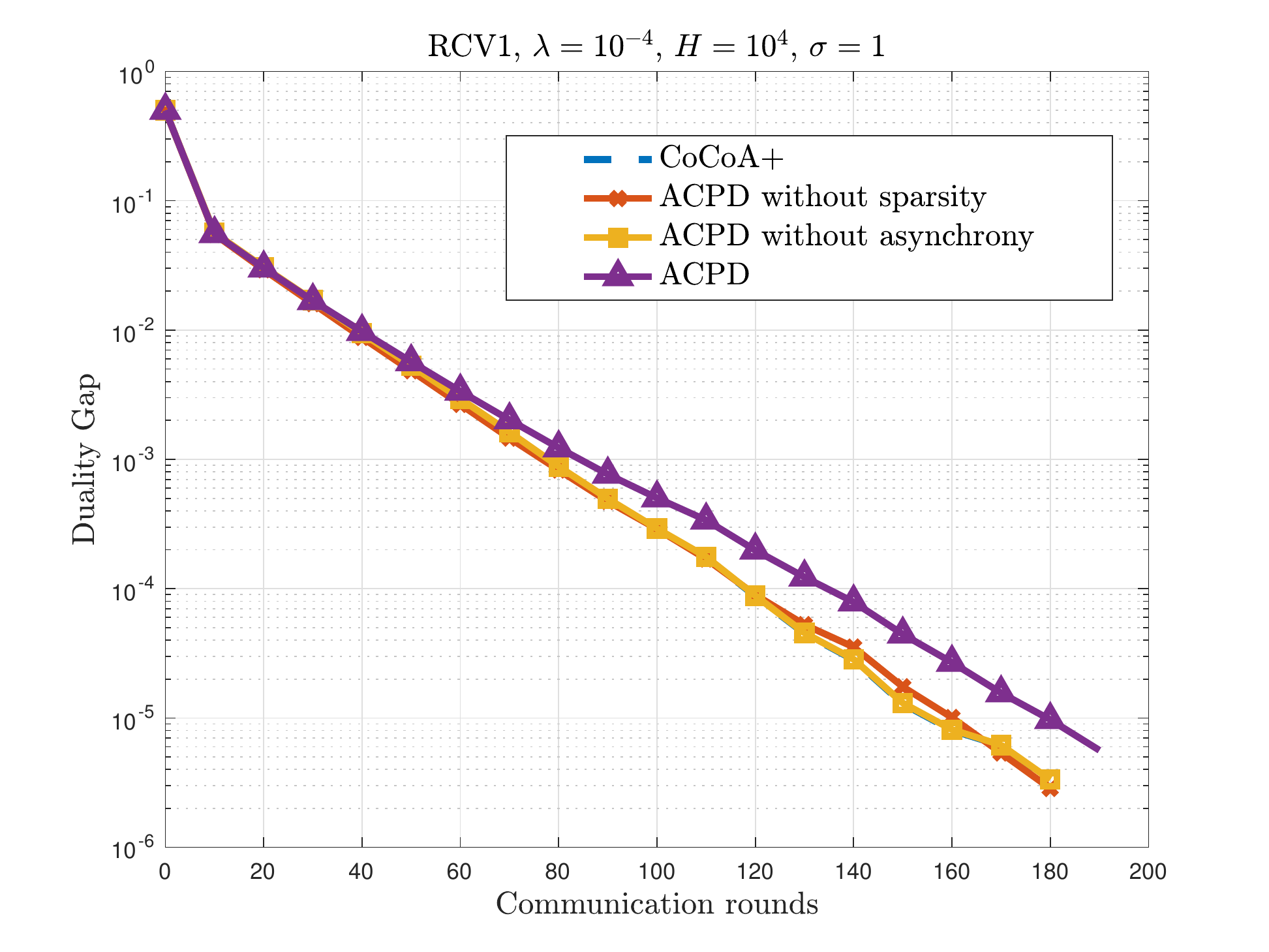}
	\end{subfigure}
	\begin{subfigure}[b]{0.24\textwidth}
		\centering
		\includegraphics[width=1.9in]{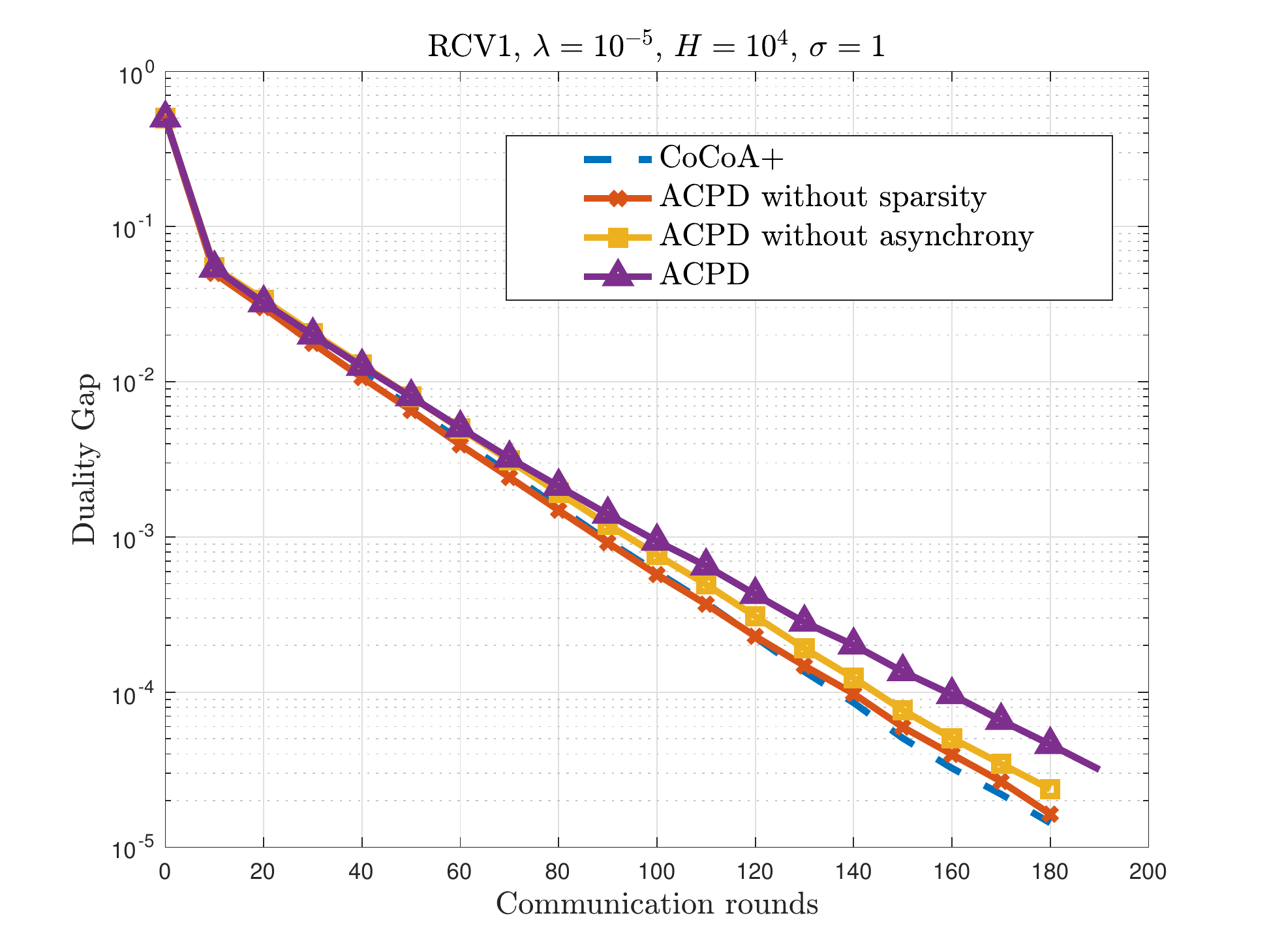}
	\end{subfigure}
	\begin{subfigure}[b]{0.24\textwidth}
		\centering
		\includegraphics[width=1.9in]{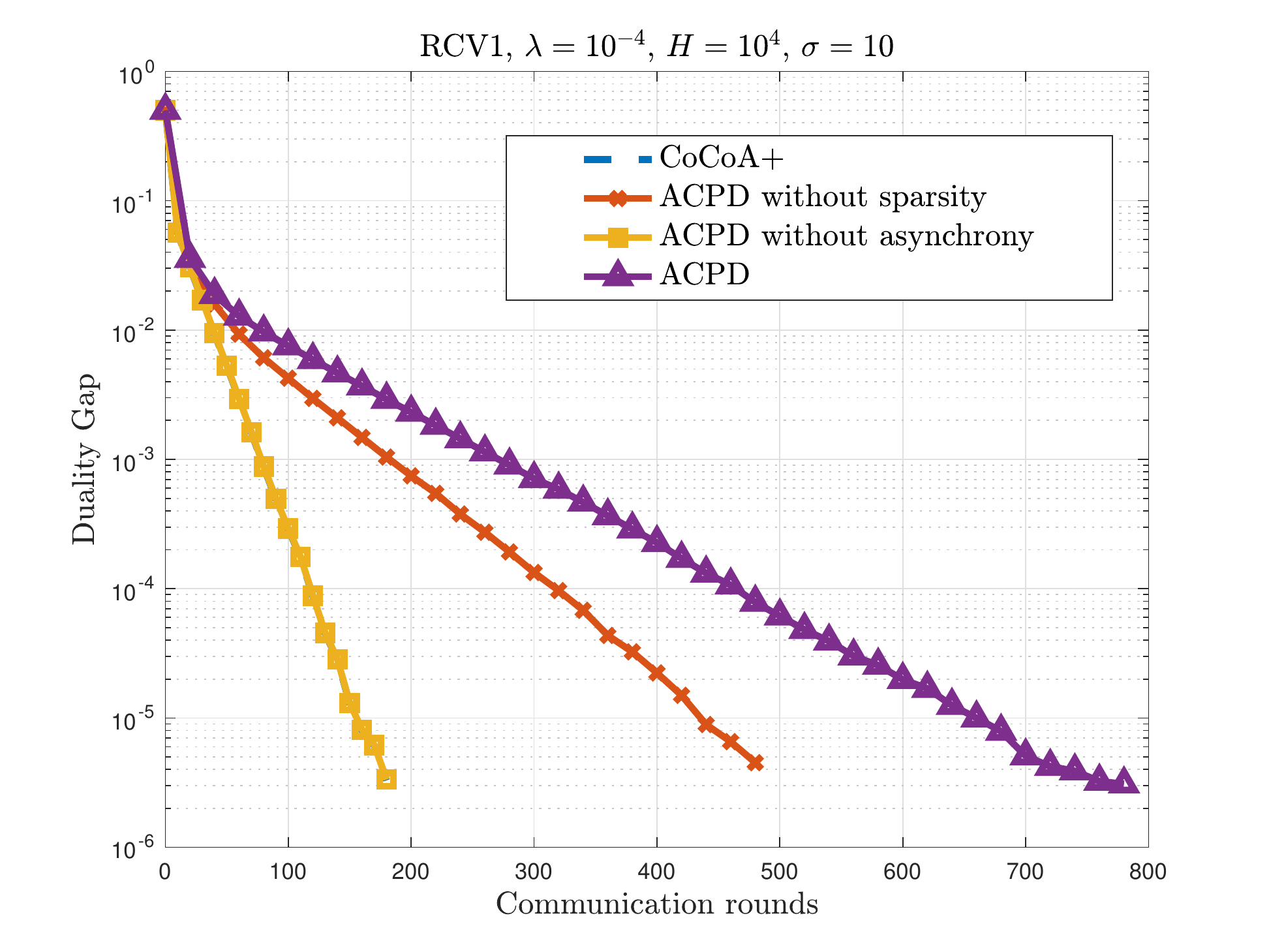}
	\end{subfigure}
	\begin{subfigure}[b]{0.24\textwidth}
		\centering
		\includegraphics[width=1.9in]{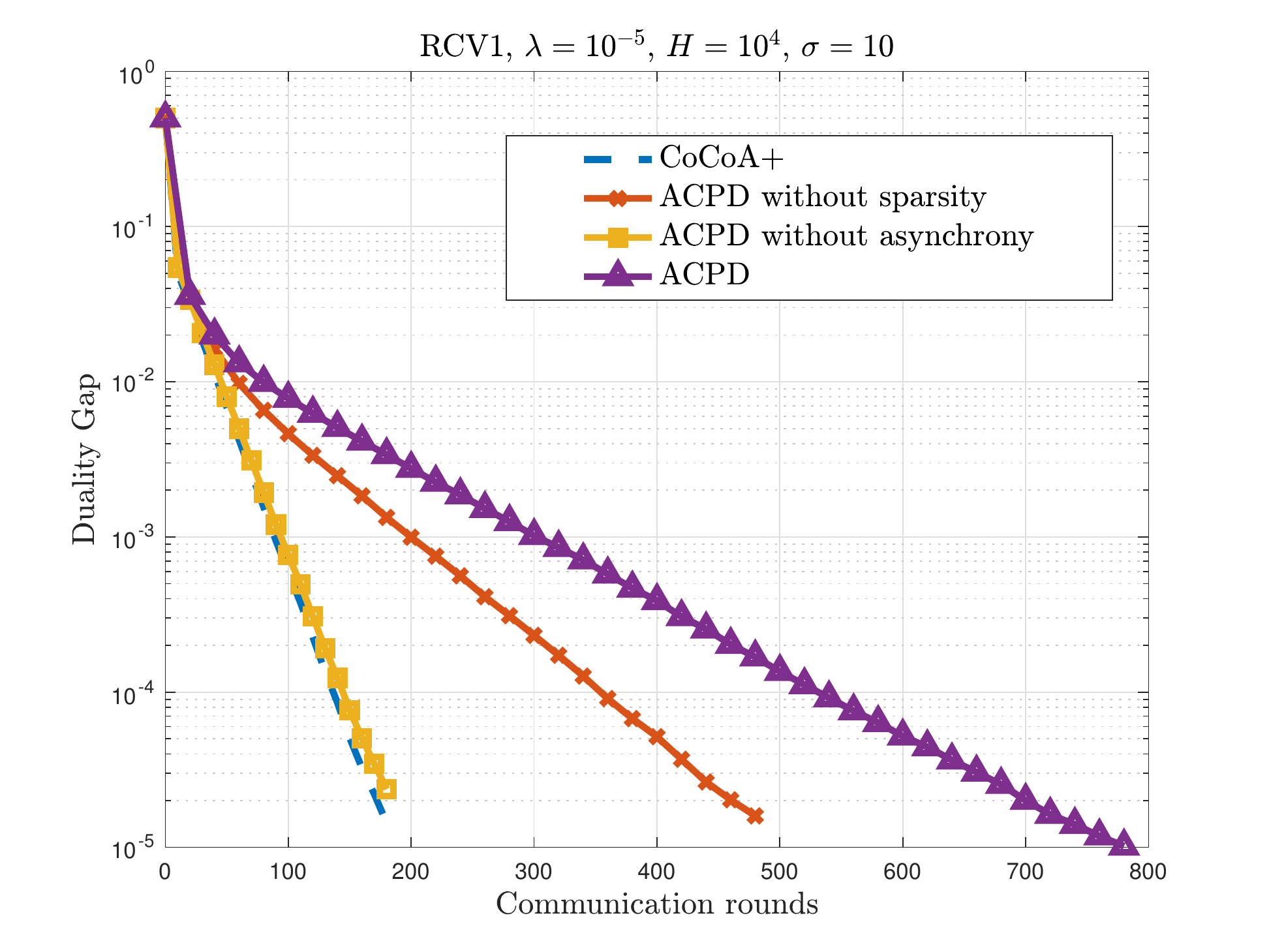}
	\end{subfigure}
	\begin{subfigure}[b]{0.24\textwidth}
		\centering
		\includegraphics[width=1.9in]{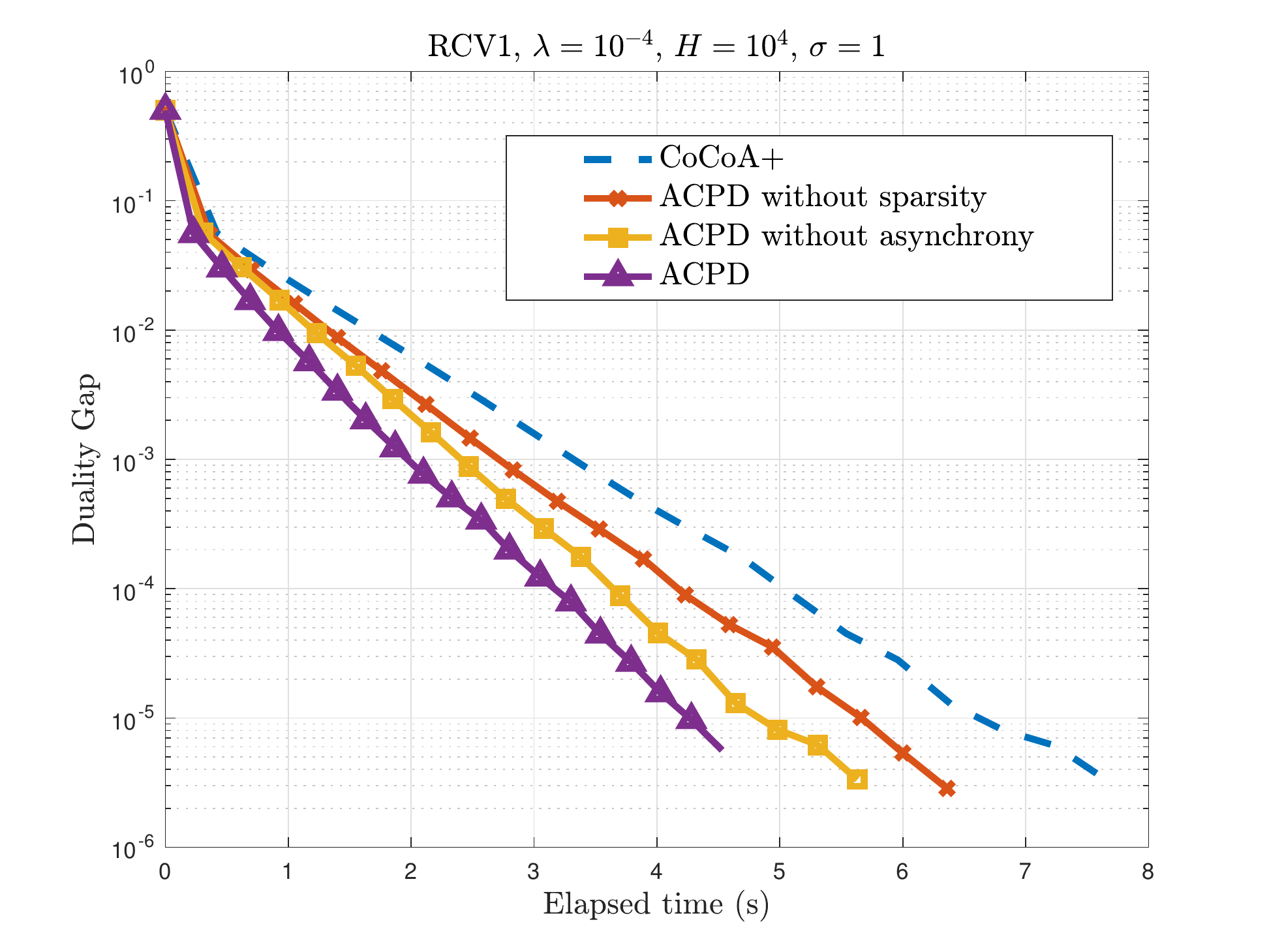}
	\end{subfigure}
	\begin{subfigure}[b]{0.24\textwidth}
		\centering
		\includegraphics[width=1.9in]{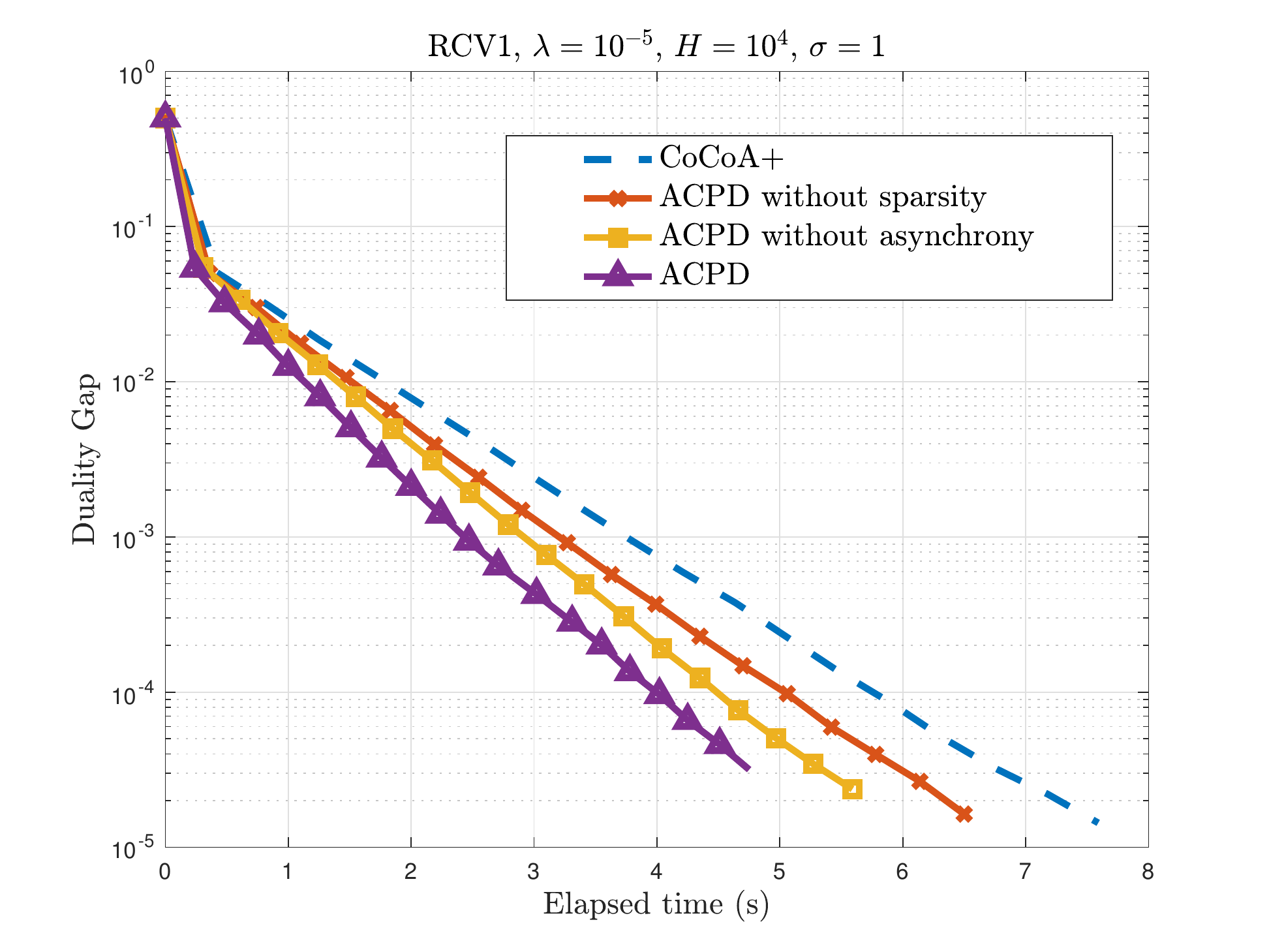}
	\end{subfigure}
	\begin{subfigure}[b]{0.24\textwidth}
		\centering
		\includegraphics[width=1.9in]{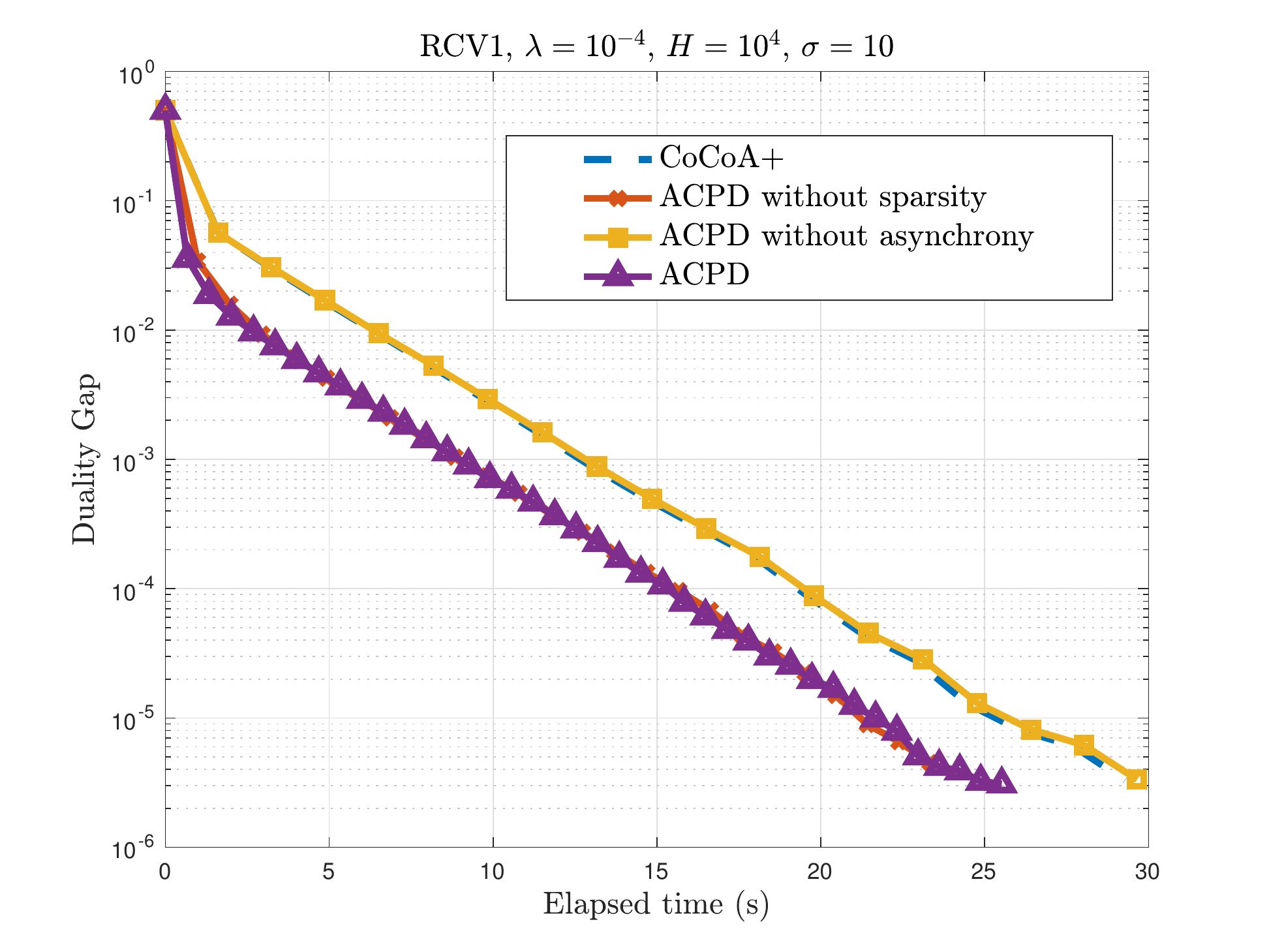}
	\end{subfigure}
	\begin{subfigure}[b]{0.24\textwidth}
		\centering
		\includegraphics[width=1.9in]{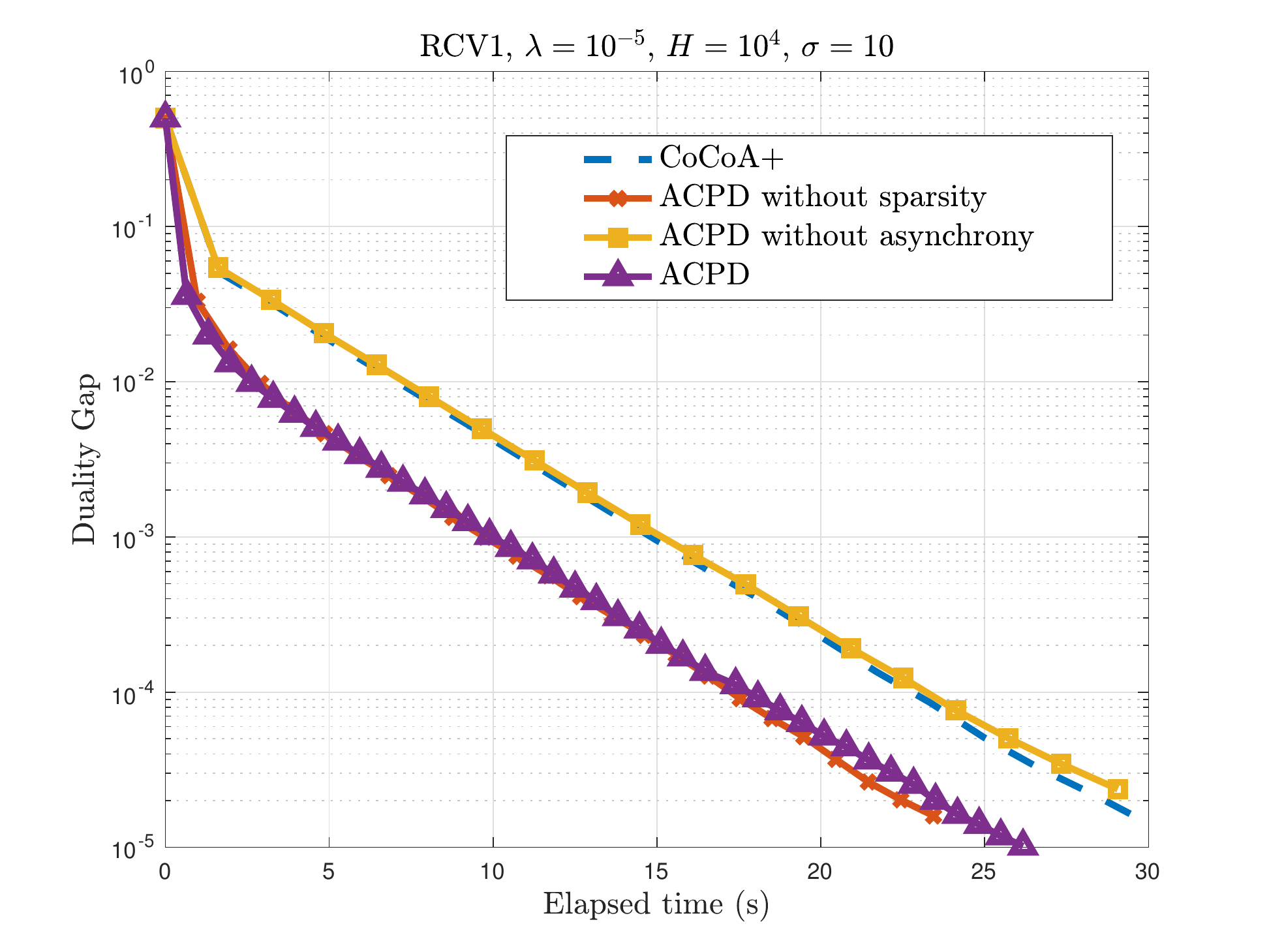}
	\end{subfigure}
	\caption{Convergence of the duality gap for compared methods regarding communication rounds and elapsed time. 
		$\sigma$ denotes the computational time which worker $1$ takes compared to other normal workers. For example, if the normal computational time is $c$, then worker $1$  takes  $\sigma c$.  
	}
	\label{mpi}
\end{figure*}

\begin{remark}
	From Theorem \ref{them1}, when $\gamma \rightarrow 0$, we have $\Delta \rightarrow  (\lambda \mu n)^2 $  and $s \rightarrow \frac{\lambda \mu n}{\lambda  \mu n + \sigma' \sigma_{\max}}$. Therefore, it is guaranteed that we can always find an appropriate $0<\gamma \leq 1$ such that $\Delta > 0$ and $ 0<s<1$ exists.
\end{remark}
\begin{theorem}
	\label{them11}
	Following notations in Theorem \ref{them1}, we can prove that to get duality gap:
$
	\mathbb{E}_k[P(w^L) - D(\alpha^L)] \leq \varepsilon_G,
$
	the outer iteration $L$ must have a lower bound that:
	\begin{eqnarray}
	L &\geq& \frac{K}{B\gamma (1-\Theta)s} \log \biggl( \frac{K}{B\gamma (1-\Theta)s} \frac{1}{\varepsilon_G} \biggr).
	\end{eqnarray}
\end{theorem}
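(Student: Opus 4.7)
The plan is to convert the exponential decay of dual sub-optimality established in Theorem~\ref{them1} into a corresponding bound on the duality gap $G(\alpha^t) = P(w^t) - D(\alpha^t)$, exploiting the fact that Lemma~\ref{lem2} already contains a term proportional to $G(\alpha^t)$ on its right-hand side. Writing $c := \frac{B}{K}\gamma s(1-\Theta)$ and $\varepsilon_D^t := D(\alpha^*) - D(\alpha^t)$, I would rearrange the inequality of Lemma~\ref{lem2} as
\begin{eqnarray*}
c\,\mathbb{E}[G(\alpha^t)] \leq \mathbb{E}\bigl[\varepsilon_D^t - \varepsilon_D^{t+1}\bigr] + \frac{B}{K}\gamma(1-\Theta) R^t + Q_7,
\end{eqnarray*}
so that controlling the leftover terms $R^t$ and $Q_7$ immediately translates dual progress into duality gap progress.

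The key observation is that those leftover terms have already been handled in the proof of Theorem~\ref{them1}: under the specific choice of $s$ given there, summing $\frac{B}{K}\gamma(1-\Theta) R^t + Q_7$ over one outer block of length $T$ is non-positive by inequality (\ref{aaaa}). Telescoping the rearranged Lemma~\ref{lem2} across such a block therefore yields $c \sum_{t=0}^{T-1} \mathbb{E}[G(\alpha^t)] \leq \mathbb{E}[\varepsilon_D^0]$, and passing to the minimum or average iterate gives a bound of the form $c\,\mathbb{E}[G(\tilde\alpha^{l+1})] \leq \mathbb{E}[\varepsilon_D^l]$ at the end of outer round $l$. Applying Theorem~\ref{them1} with the surrogate dual target $\varepsilon_D = c\,\varepsilon_G$ then produces
\begin{eqnarray*}
L \geq \frac{1}{c}\log\!\left(\frac{1}{c\,\varepsilon_G}\right) = \frac{K}{B\gamma(1-\Theta)s}\log\!\left(\frac{K}{B\gamma(1-\Theta)s}\frac{1}{\varepsilon_G}\right),
\end{eqnarray*}
which is exactly the claimed bound.

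The main obstacle I anticipate is that Lemma~\ref{lem2} does not deliver a clean per-iteration inequality of the form $c\,G(\alpha^t) \leq \varepsilon_D^t - \varepsilon_D^{t+1}$, because the delay-induced remainder $Q_7$ and the variance-type term $R^t$ are not individually sign-controlled; their cancellation only kicks in after summation across a full outer loop and only for the $s$ selected in Theorem~\ref{them1}. Consequently, the duality gap bound must be read off from an averaged or minimum iterate over a block, rather than from the last iterate of a single inner step, and one has to be careful that this averaged bound combines with the outer-loop recursion $D(\tilde\alpha^{l+1}) = D(\alpha^T)$ without losing the logarithmic factor. Once that bookkeeping is in place, the remainder is routine algebra: substitute the exponential bound from Theorem~\ref{them1} and invert to get the stated lower bound on $L$.
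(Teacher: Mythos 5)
Your proposal follows essentially the same route as the paper: rearrange the per-iteration inequality (\ref{iq_2001}) so that $G(\alpha^t)$ is bounded by $\frac{K}{B\gamma s(1-\Theta)}$ times the dual decrease, and then invoke the linear rate of Theorem~\ref{them1} with the surrogate target $\varepsilon_D = \frac{B\gamma s(1-\Theta)}{K}\varepsilon_G$. If anything, you are more careful than the paper, which silently drops the $R^t$ and $Q_7$ remainders and reads the gap bound off the last iterate $\tilde\alpha^l$ rather than off a block average or minimum iterate as your accounting would require.
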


\begin{proof}
	To bound the duality gap, as per (\ref{iq_2001}) we have:
	\begin{eqnarray}
	G(\tilde\alpha^l) &\leq& \frac{K}{B\gamma s (1-\Theta)} (D(\tilde\alpha^l) - D(\tilde\alpha^{l+1}))  \nonumber \\
	&\leq & \frac{K}{B\gamma s (1-\Theta)} (\varepsilon_D^l - \varepsilon_D^{l+1}  )  \nonumber \\
	&\leq & \frac{K}{B\gamma s (1-\Theta)} \varepsilon_D^l .\nonumber \\
	\end{eqnarray}
	Hence, in order to get   $G(\alpha^L) \leq \varepsilon_G$, we must make sure that:
	\begin{eqnarray}
	L &\geq& \frac{K}{B\gamma (1-\Theta)s} \log \biggl(\frac{K}{B\gamma (1-\Theta)s} \frac{1}{\varepsilon_G } \biggr).
	\end{eqnarray}
\end{proof}

Above all, we prove that the proposed method guarantees linear convergence rate for the convex problem as long as Assumptions in this section are satisfied.

\section{Experiments}
\label{exp}
In this section, we perform data mining experiments with large-scale datasets on distributed environments. Firstly, we describe the implementation details of our experiments in Section \ref{exp_sys}. Then, in Section \ref{exp_sim}, we evaluate our method in a simulated distributed environment with straggler problem. Finally, we conduct experiments in a real distributed environment in Section \ref{exp_real}. Experimental results show that the proposed method can be up to 4 times faster than compared methods in real distributed system.

\subsection{Implementations}
\label{exp_sys}
In the experiment, we apply the proposed algorithm to solve a ridge regression problem, where $\phi$  is the least square loss. The dual problem of the ridge regression problem  can be represented as follows :
\begin{eqnarray}
\max\limits_{\alpha \in \mathbb{R}^n} \frac{1}{n} \sum\limits_{i=1}^n \left( \alpha_i y_i - \frac{\alpha_i^2}{2} \right) - \frac{\lambda}{2} \left\| \frac{1}{\lambda n} \sum\limits_{i=1}^n x_i \alpha_i  \right\|^2,
\label{exp_dual}
\end{eqnarray}
where $y_i \in \{-1, 1\}$ and $x_i \in \mathbb{R}^d$ are labels and feature vectors respectively.  We use three binary classification datasets from LIBSVM dataset collections\footnote{ https://www.csie.ntu.edu.tw/~cjlin/libsvmtools/datasets/binary.html}: RCV1, URL and KDD. Table \ref{data} shows brief details about these datasets.  We compare with the CoCoA+ method in the
\begin{table}
	\centering
	\caption{Summary of three real large-scale datasets in the experiment.}
	\begin{tabular}{ c c c  c  }
		\hline
		Dataset &  $\#$Samples (n) & $\#$ Features (d) & Size  \\
		\hline
		RCV1 & $677,399$ & $47,236$ & $1.2$ G \\
		\hline
		URL & $2,396,130$ & $3,231,961$ & $2.1$ G\\
		\hline
		KDD & $19,264,097$ & $29,890,095$ &$4.8$ G \\
		\hline
	\end{tabular}
	\label{data}
\end{table}
experiment, as it has been shown to be superior to other related methods \cite{ma2015adding}.

We implement the compared methods  CoCoA+ and  ACPD using C++, where the communication between workers and server is handled by OpenMPI\footnote{https://www.open-mpi.org/}. We use 'Send' and 'Recv' for point to point communications and 'allreduce' for collective communications.  All experiments are conducted on Amazon Web Services, where each node is a t2.medium instance with two virtual CPUs.

\begin{figure}[!htp]
	\centering
	\begin{subfigure}[b]{0.24\textwidth}
		\centering
		\includegraphics[width=1.9in]{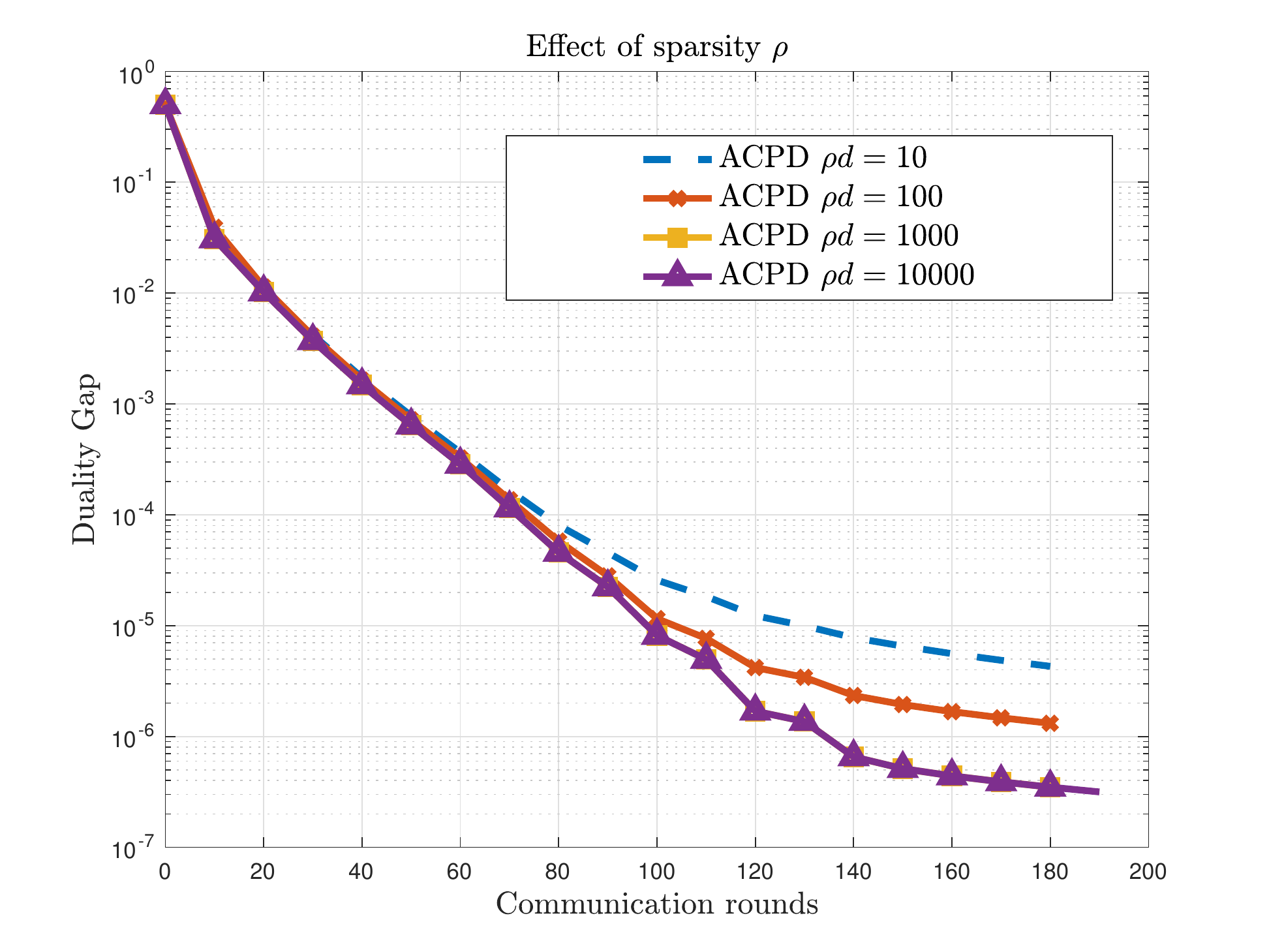}
		\caption{}
		\label{e_sparsity}
	\end{subfigure}
	\begin{subfigure}[b]{0.24\textwidth}
		\centering
		\includegraphics[width=1.9in]{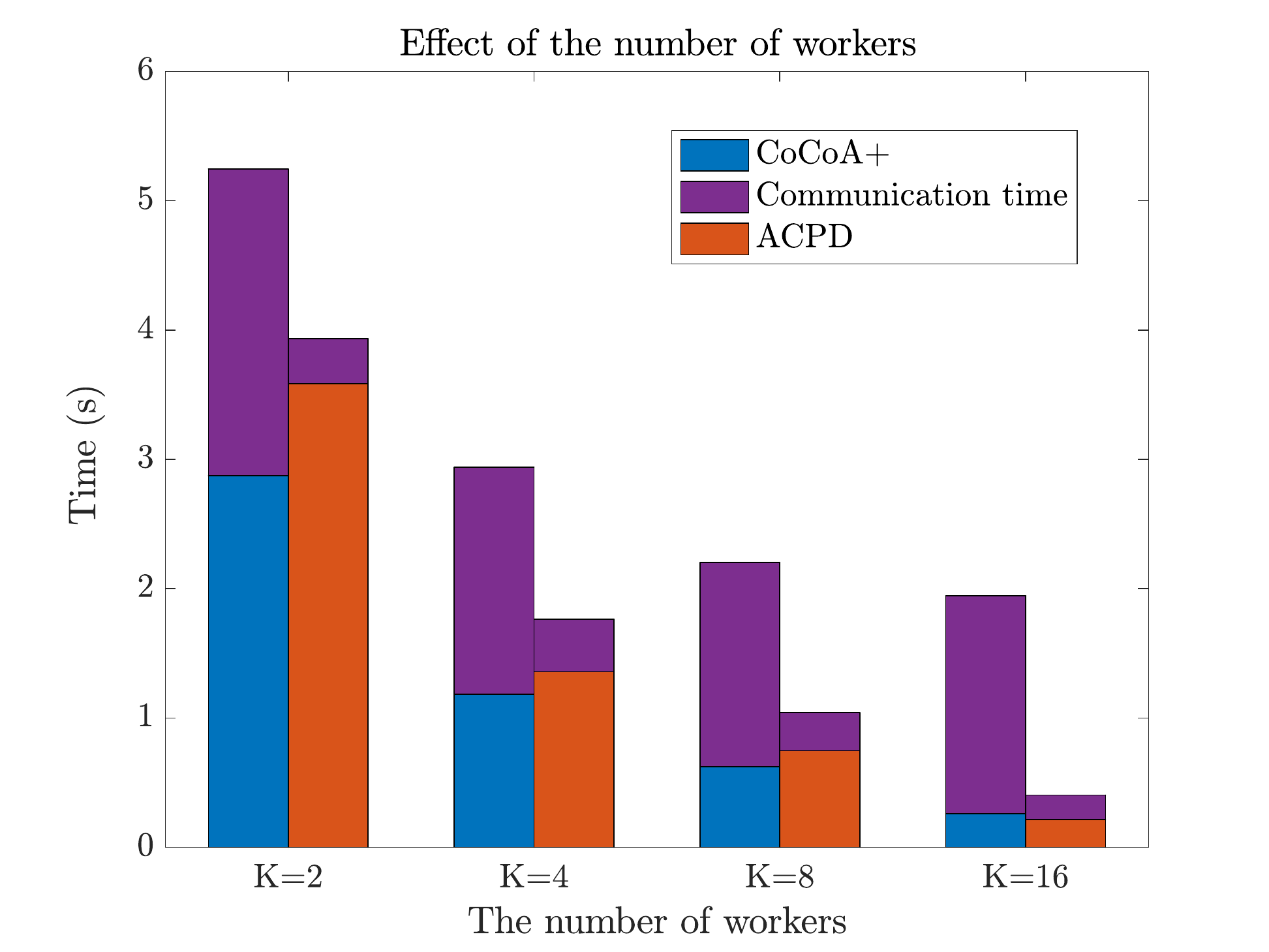}
		\caption{}
		\label{e_t}
	\end{subfigure}
	\caption{Figure \ref{e_sparsity}: Duality gap convergence of the proposed method in terms of the communication rounds with different sparsity constant $\rho$; Figure \ref{e_t}: Total running time of compared methods with a different number of workers.}
\end{figure}

\begin{figure*}[t]
	\centering
	\begin{subfigure}[b]{0.32\textwidth}
		\centering
		\includegraphics[width=2.4in]{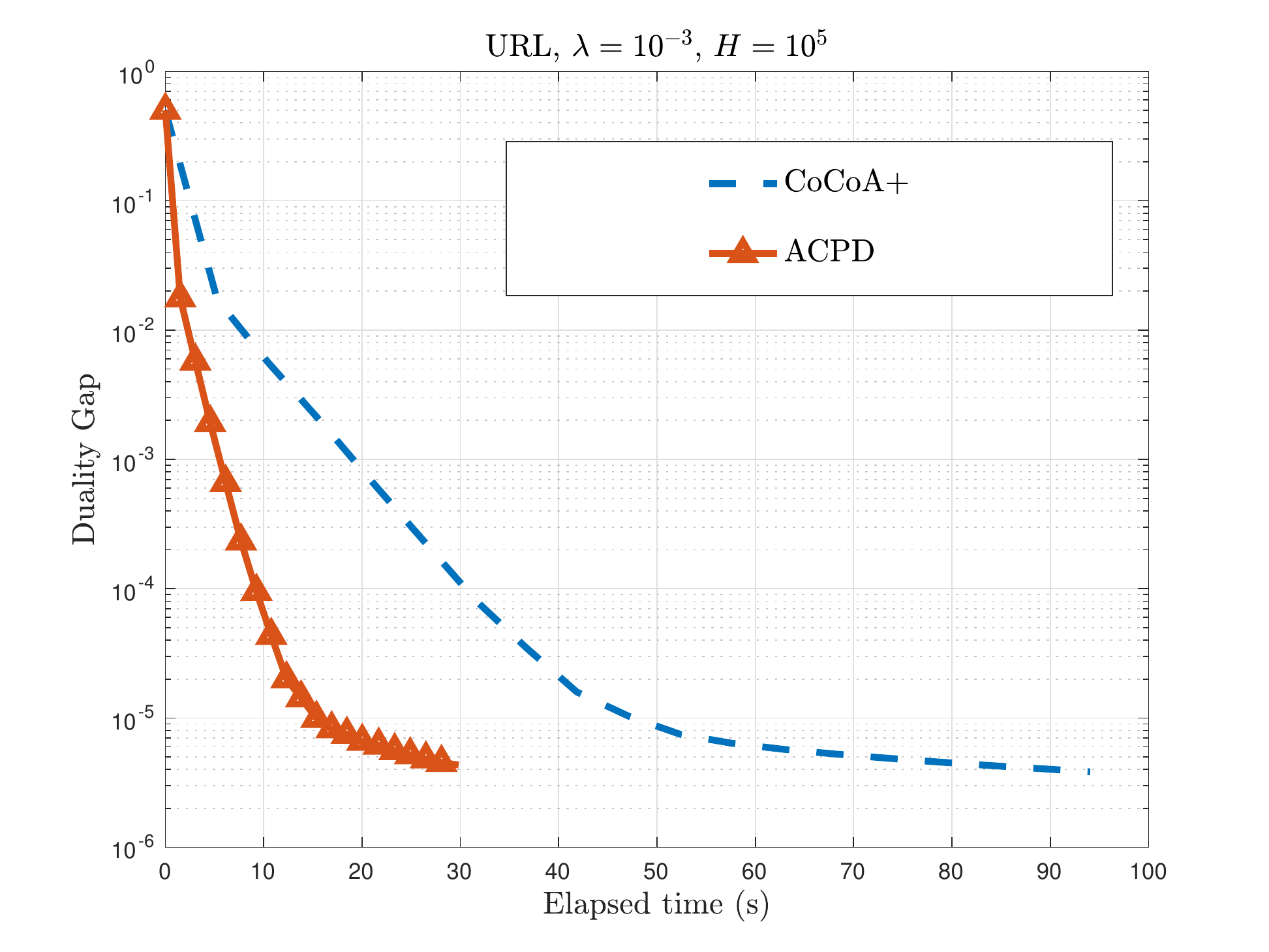}
	\end{subfigure}
	\begin{subfigure}[b]{0.32\textwidth}
		\centering
		\includegraphics[width=2.4in]{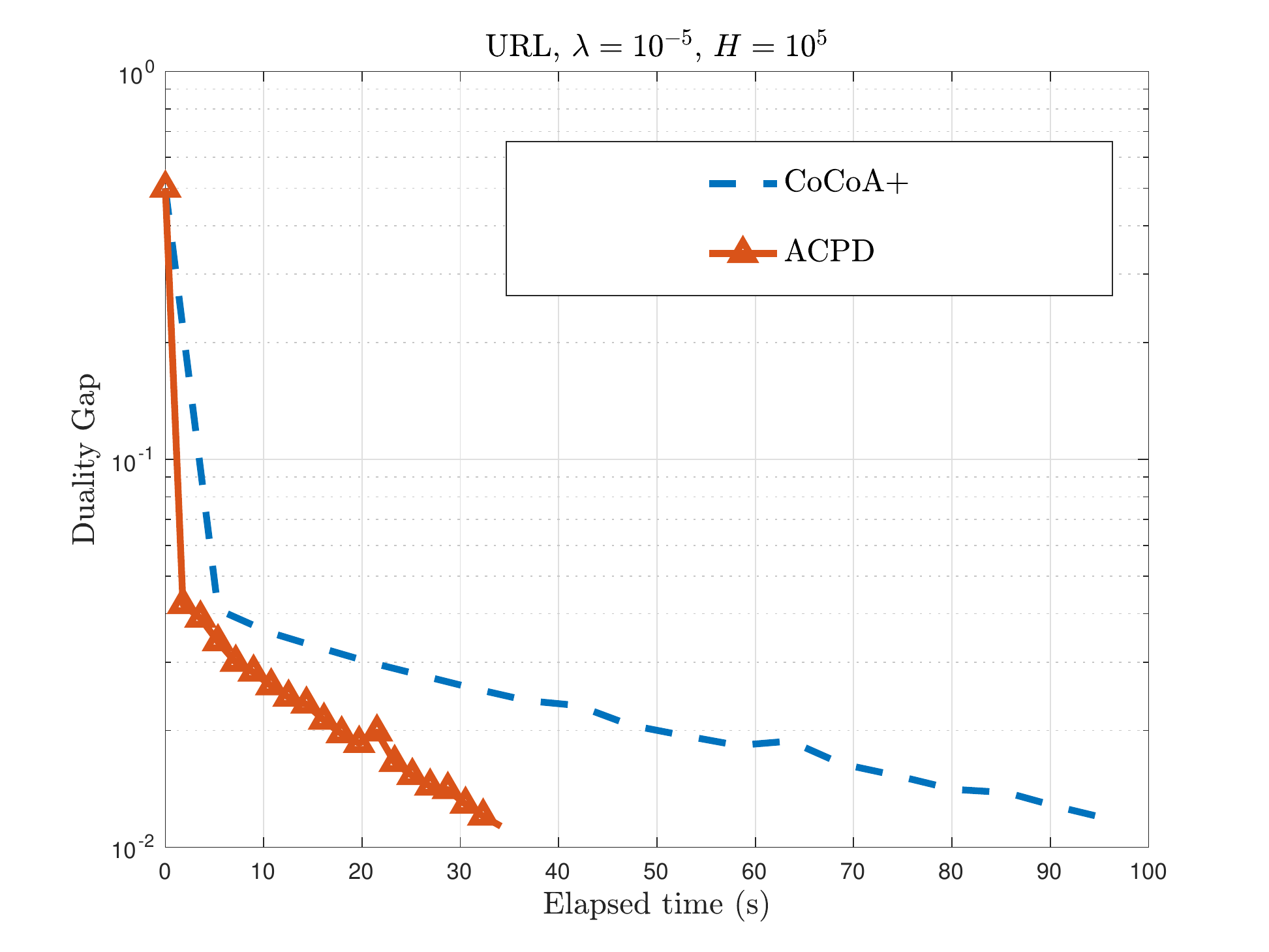}
	\end{subfigure}
	\begin{subfigure}[b]{0.32\textwidth}
		\centering
		\includegraphics[width=2.4in]{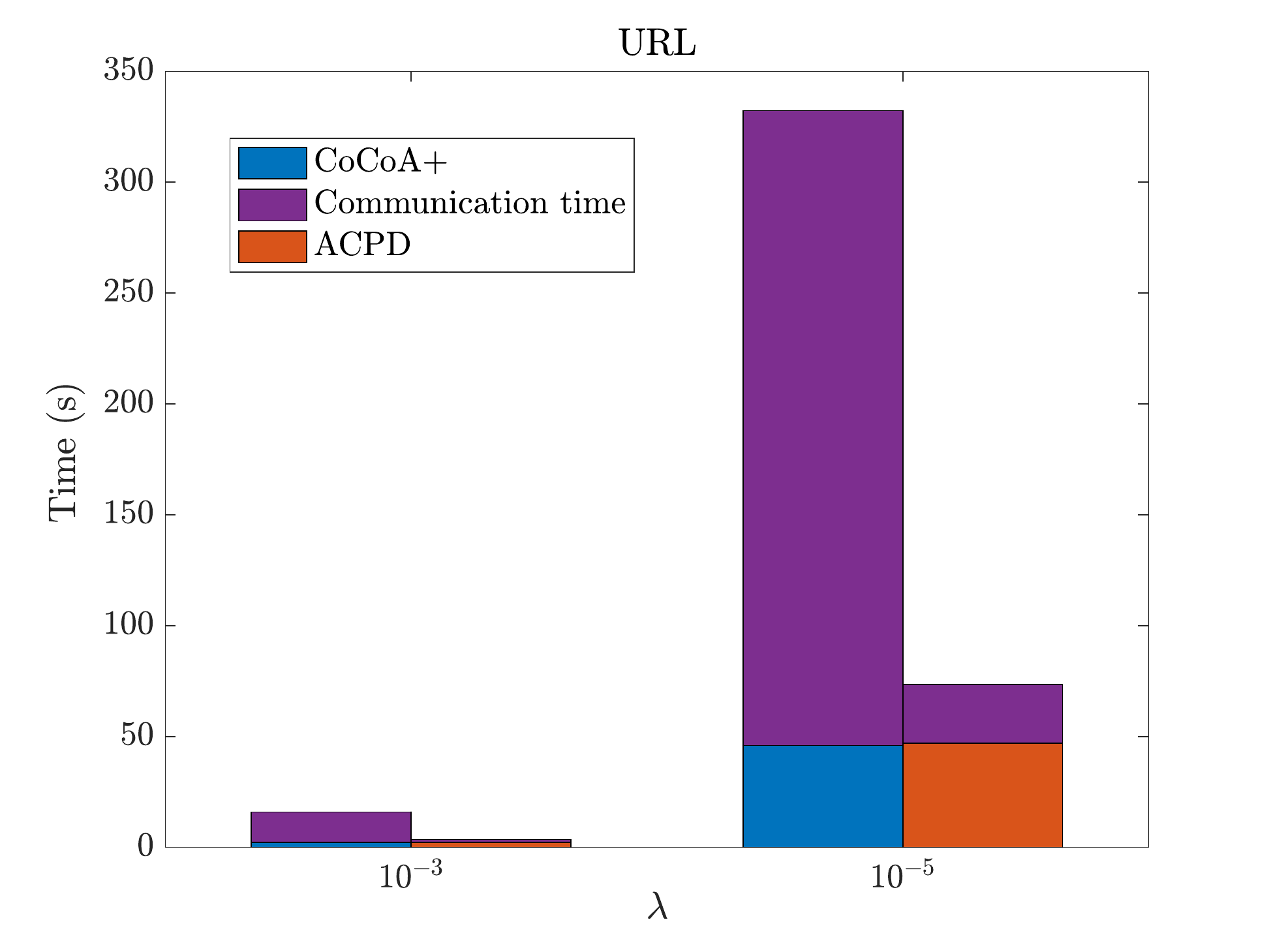}
	\end{subfigure}
	\begin{subfigure}[b]{0.32\textwidth}
		\centering
		\includegraphics[width=2.4in]{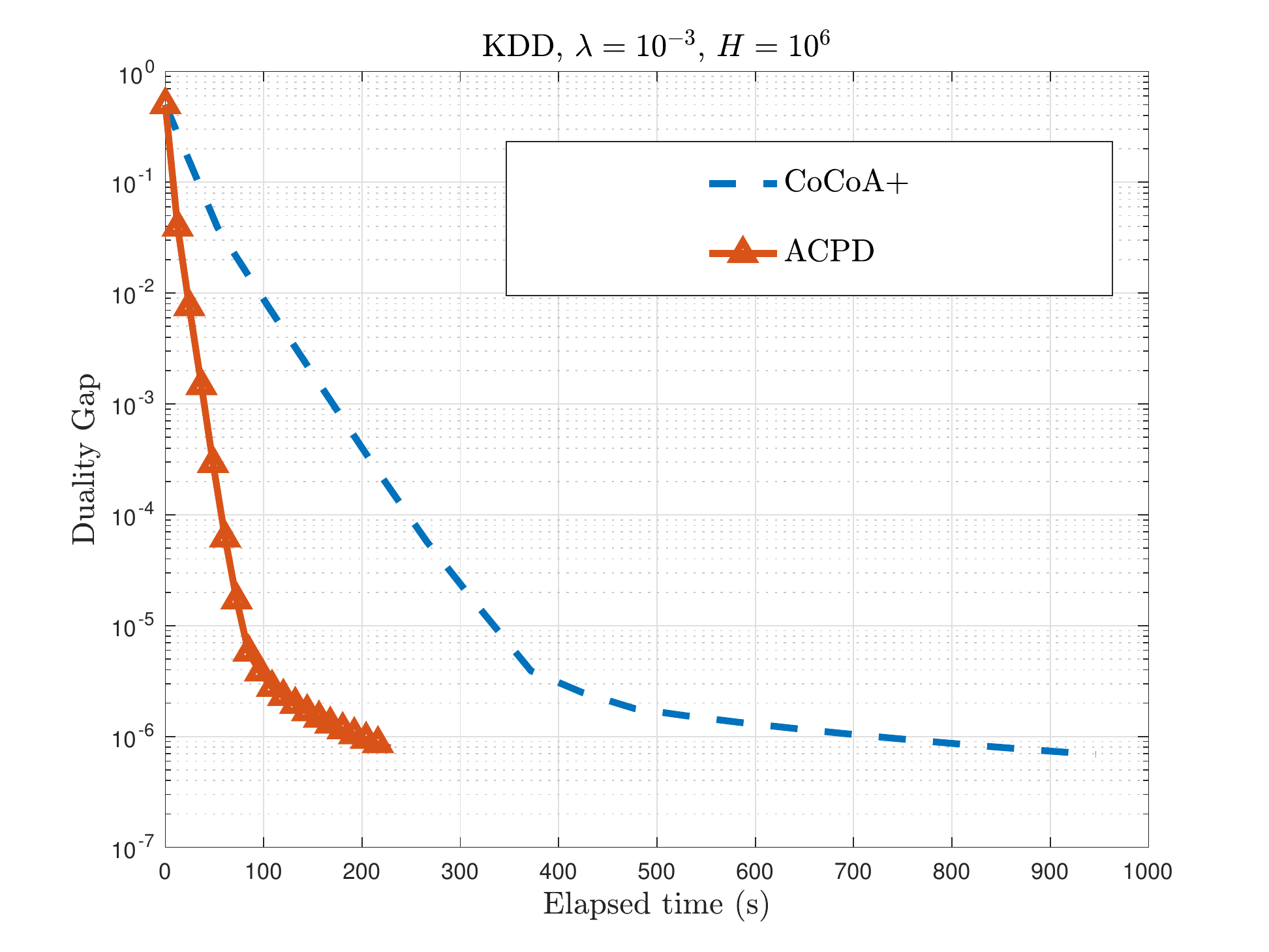}
	\end{subfigure}
	\begin{subfigure}[b]{0.32\textwidth}
		\centering
		\includegraphics[width=2.4in]{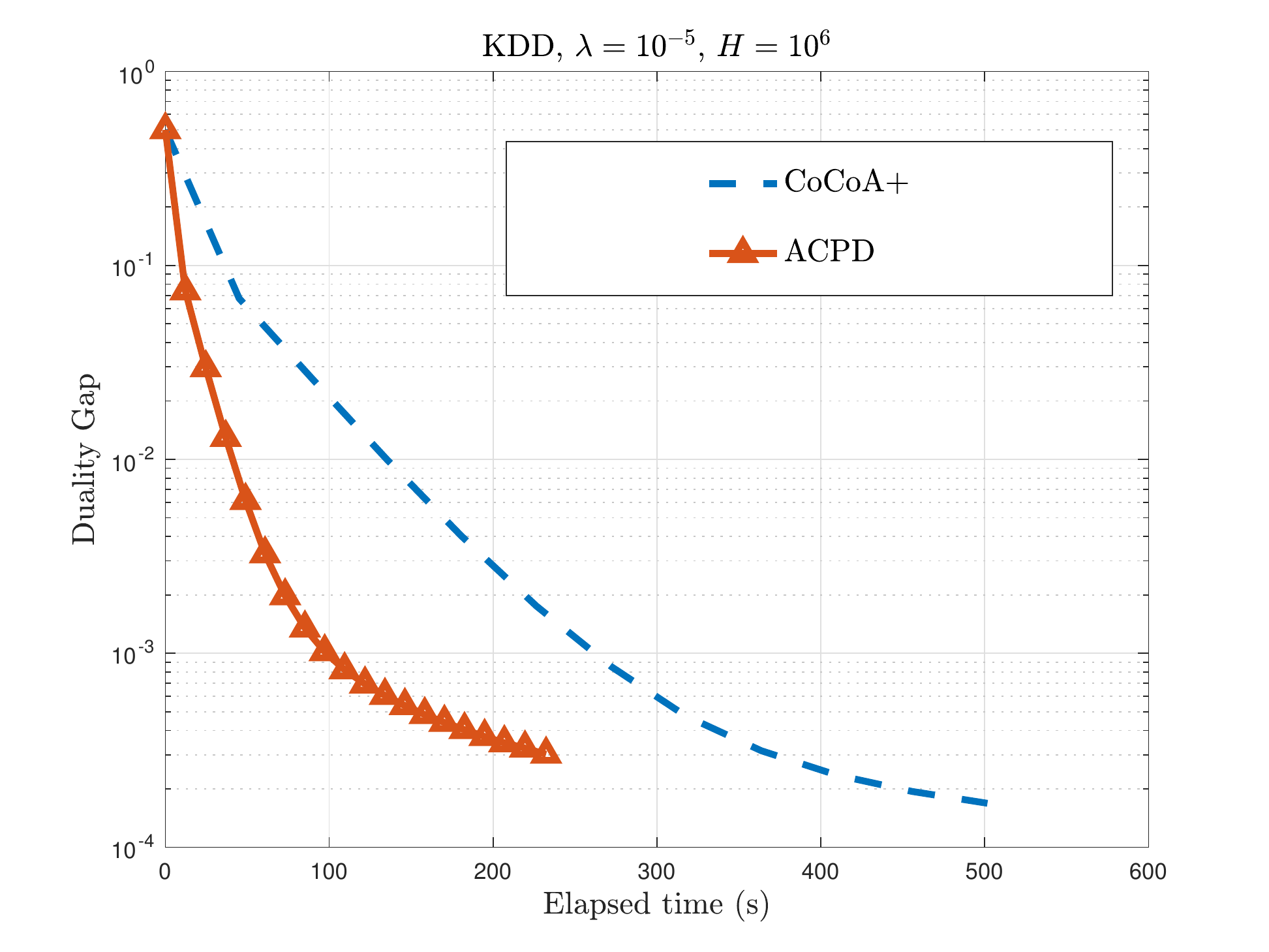}
	\end{subfigure}
	\begin{subfigure}[b]{0.32\textwidth}
		\centering
		\includegraphics[width=2.4in]{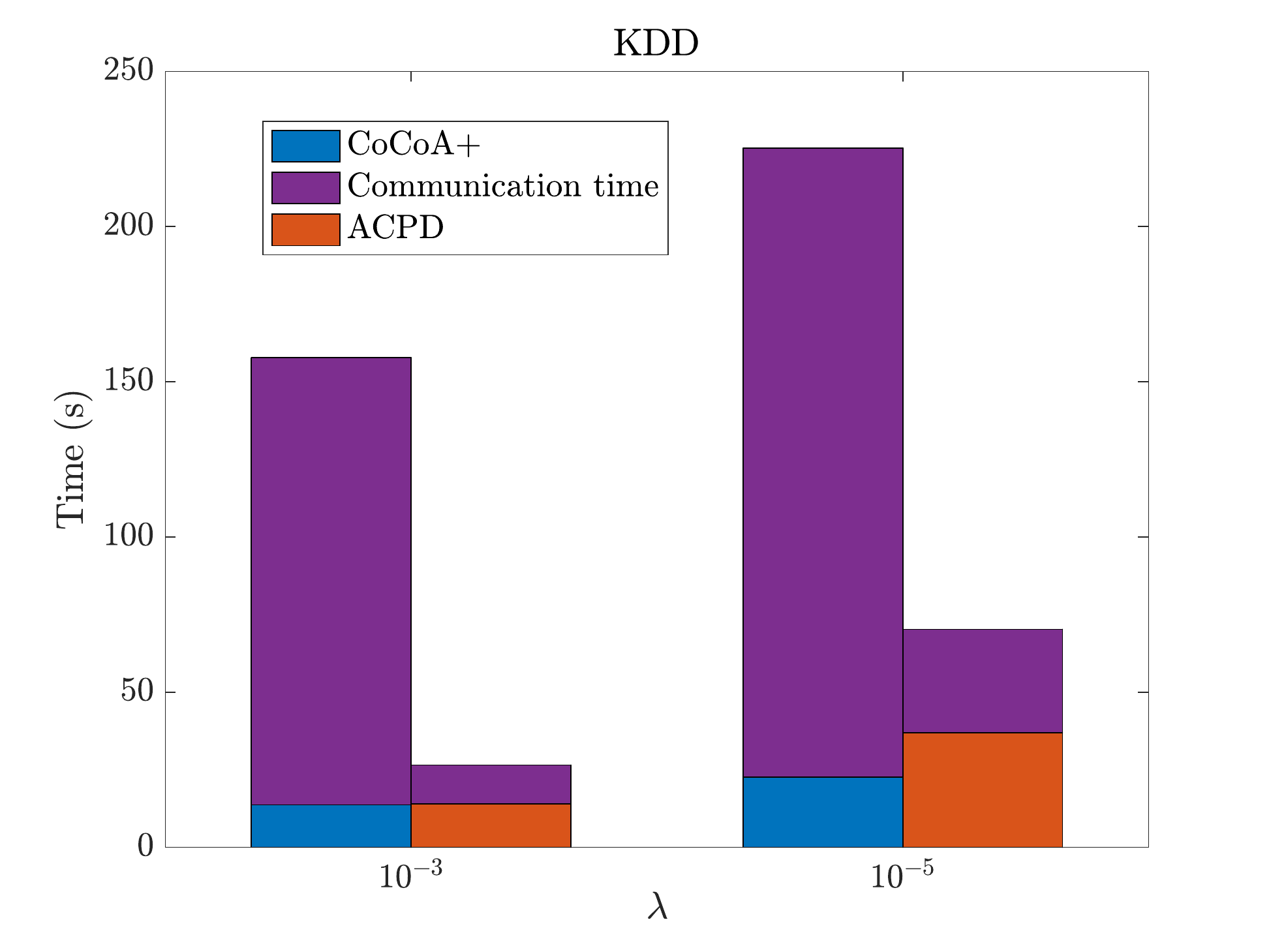}
	\end{subfigure}
	\caption{The left two columns show the convergence of duality gap for the compared methods regarding elapsed time. The right column presents the computational time and communication time of the compared methods when they reach similar duality gap during the optimization.
	}
	\label{real}
\end{figure*}
\subsection{ Experiments in Simulated Distributed Environment}
\label{exp_sim}
In this section, we perform experiments on RCV1 in a distributed system with a  straggler worker. We simulate the straggler problem by forcing worker $1$  to sleep at each iteration such that the computational time of worker $1$ is $\sigma$ times as long as the computational time of other normal workers. 

\subsubsection{Convergence of Compared Methods } 
In Figure \ref{mpi}, we present the convergence rate regarding communication rounds or elapsed time of the compared methods. The RCV1 dataset is distributed across four workers.  We follow the parameter settings in \cite{ma2015adding} for CoCoA+. For ACPD, we set $B=2$, $T=20$ and $\rho d=10^3$ in the experiment. To analyze the effect of straggler-agnosticism and  bandwidth-efficiency, we also do ablation studies by setting $B=K$ or $\rho=1$.  Figures in the first two columns show the results when we set $\sigma=1$. In this situation, the waiting time for the straggler machine is comparable to the communication time between machines. There are two observations: (1) our method admits nearly similar convergence rate to CoCoA+ when $\sigma$ is small, (2)   ACPD converges faster than other compared methods regarding time.  It is obvious that both straggler-agnosticism and bandwidth-efficiency are beneficial for the distributed optimization.  We then set $\sigma =10$ and plot the results in the third and fourth columns of Figure \ref{mpi}. In this situation, the communication time between machines is negligible compared to the waiting time for the straggler machine such that the maximum delay $\tau$ is close to $T$. There are three observations: (1) group-wise communication affects the convergence rate when straggler problem is serious and $\tau$ is large; (2) sparse communication may affect the convergence rate;  (3) when the straggler problem is serious, ACPD is much faster than CoCoA+.

\subsubsection{Effect of Sparsity  Constant $\rho$}
In this section, we evaluate the performance of our methods when we vary the value of the sparsity constant $\rho$. We set $\sigma=1$, $B=2$ and $T=20$ in the experiment, and the RCV1 dataset is distributed across four workers. In Figure \ref{e_sparsity}, we can observe that when we decrease the value of $\rho d$ from $10^4$ to $10$, the convergence rate of ACPD is stable if the magnitude of the duality gap is above $10^{-4}$. The convergence rate of duality gap degrades a little when it is below $10^{-4}$. In practice, we can get good generalization error when the duality gap is above $10^{-4}$. Therefore, our method is  robust to the selection of sparsity constant $\rho$.

\subsubsection{Scaling Up  Workers}
We evaluate the speedup properties of ACPD and CoCoA+ in a distributed system with straggler problem. In the experiment, we set $\sigma =1$, and  $H=10^4$. We test the performance of the compared methods when  $K  \in \{2,4,8,16\} $. For our method, we let $B= \frac{K}{2}$, $\rho d=10^3$ and  $T = 10$. We plot the elapsed time of the compared methods when they reach duality gap $1\times 10^{-4}$  in Figure \ref{e_t}. From this figure, we can observe that our method always spends much less time to reach similar duality gap compared to CoCoA+. As $K$ becomes large, the communication time becomes the bottleneck and constrains CoCoA+ from further speedup. Experimental results show that the group-wise and sparse communication helps ACPD reduce the communication time remarkably, such that it can make better use of resources in a cluster.

\subsection{Experiments in Real Distributed Environment}
\label{exp_real}
In this section, we perform large-scale experiments with KDD and URL datasets in a real distributed environment, where there are other jobs running in different workers. We distribute all datasets on eight workers in AWS platform. For our method, we let $B= 4$, $\rho d=10^3$ and  $T = 10$. Figure \ref{real} presents the performance of the compared methods in terms of computational time. We can observe that our method is much faster than CoCoA+ in the real distributed environment. For example, ACPD is $4$ times faster than CoCoA+ to get duality gap $10^{-6}$ according to the first figure in the second row of Figure \ref{real}.   According to the results in the third column of Figure \ref{real}, the proposed method spends much less communication time than the other compared method.

\section{Conclusion}
In this paper, we propose a novel straggler-agnostic and bandwidth-efficient distributed primal-dual algorithm for high-dimensional data. The proposed method utilizes the group-wise and sparse communication to reduce the communication time in the distributed environment. We  provide the theoretical analysis of the proposed method for convex problem and prove that our method guarantees linear convergence to the optimal solution under certain conditions. Finally, we perform large-scale experiments in distributed environments. Experimental results verify that the proposed method can be up to 4 times faster than compared methods in real distributed system.

\bibliography{asynCoCoA}
\bibliographystyle{plain}

\appendix
\textbf{Appendix} \\
\begin{lemma}
	\label{lem1}
	According to the update rule of $\alpha$ in the Algorithm \ref{alg_server} and \ref{alg_worker}, $\alpha^{t+1} = \alpha^t + \gamma \sum\limits_{k \in \Phi} \Delta \alpha_{[k]}^{d_k(t)} $. If we let $\sigma' = \gamma B$, the following inequality  holds that:
	{\small
		\begin{eqnarray}
		\label{func_lem3}
		\mathbb{E}_k \left[D(\alpha^{t+1}) - D(\alpha^t)\right]  \geq  \frac{B}{K}\gamma \left( \sum\limits_{k=1}^K \mathbb{E}_k\left[ \mathcal{G}_k^{\sigma'} \right]-  D(\alpha^t) \right)  \nonumber \\
		 -  \frac{B}{K}\gamma  \biggl( \frac{\lambda}{2} \|w^t\|^2- \frac{\lambda}{2} \frac{1}{K} \sum\limits_{k=1}^K\left\|w^{d_k(t)}\right\|^2  \nonumber \\
		+ \frac{1}{n} \sum\limits_{k=1}^K\sum\limits_{i\in P_k} (w^t - w^{d_k(t)} )^T x_i \Delta \alpha_i^{t} \biggr).
		\end{eqnarray}
	}
\end{lemma}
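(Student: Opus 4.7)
The plan is to extend the CoCoA+ derivation of $D(\alpha^{t+1}) - D(\alpha^t)$ (Lemma 3 in Ma et al.~2015) to the group-wise, stale-update regime, then match the residual discrepancy terms with those listed on the right-hand side. I would first expand
$$D(\alpha^{t+1}) - D(\alpha^t) = \tfrac{1}{n}\sum_i \bigl[\phi_i^*(-\alpha_i^t) - \phi_i^*(-\alpha_i^{t+1})\bigr] + \tfrac{\lambda}{2}\bigl(\|w^t\|^2 - \|w^{t+1}\|^2\bigr),$$
and use $\alpha^{t+1} = \alpha^t + \gamma \sum_{k\in\Phi}\Delta\alpha_{[k]}^{d_k(t)}$ together with $w^{t+1} = w^t + \gamma \sum_{k\in\Phi}\Delta w_k^{d_k(t)}$, where $\Delta w_k^{d_k(t)} = \tfrac{1}{\lambda n}A_{[k]}\Delta\alpha_{[k]}^{d_k(t)}$. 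Taking conditional expectation over the uniformly random active subset $\Phi$ of size $B$ converts each per-worker sum $\sum_{k\in\Phi}$ into $\tfrac{B}{K}\sum_{k=1}^K$, producing the outer factor $\tfrac{B}{K}\gamma$ appearing in the lemma.

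For the $\phi_i^*$ piece I would rewrite, for $i\in P_k$, the coordinate update as the convex combination $\alpha_i^{t+1} = (1-\gamma)\alpha_i^t + \gamma(\alpha_i^t + \Delta\alpha_i^{d_k(t)})$ and apply convexity of $\phi_i^*$; summed over $i$ and averaged over $\Phi$, this reproduces exactly the $-\tfrac{1}{n}\sum_{i\in P_k}\phi_i^*(-(\alpha+\Delta\alpha_{[k]})_i)$ term inside each $\mathcal{G}_k^{\sigma'}$. For the quadratic piece I would regroup $w^{t+1} = (1-\gamma)w^t + \gamma \tfrac{1}{B}\sum_{k\in\Phi}(w^t + B\,\Delta w_k^{d_k(t)})$ and apply Jensen's inequality to $\|\cdot\|^2$, giving
$$\|w^{t+1}\|^2 \leq (1-\gamma)\|w^t\|^2 + \gamma \tfrac{1}{B}\sum_{k\in\Phi}\|w^t + B\,\Delta w_k^{d_k(t)}\|^2.$$
Expanding the squared term yields a $\|w^t\|^2$ piece, a cross term proportional to $(w^t)^T A_{[k]}\Delta\alpha_{[k]}^{d_k(t)}$, and a quadratic proportional to $B^2\|\Delta w_k^{d_k(t)}\|^2$; the choice $\sigma' = \gamma B$ is exactly what makes the resulting coefficient on $\tfrac{1}{\lambda^2 n^2}\|A_{[k]}\Delta\alpha_{[k]}^{d_k(t)}\|^2$ agree with the $\tfrac{\lambda}{2}\sigma'$ term in $\mathcal{G}_k^{\sigma'}$.

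The staleness enters in the next step. Every occurrence of $w^t$ that should match $\mathcal{G}_k^{\sigma'}$ is rewritten as $w^{d_k(t)} + (w^t - w^{d_k(t)})$. The $\tfrac{\lambda}{2K}\|w^{d_k(t)}\|^2$ and the cross term $-\tfrac{1}{n}(w^{d_k(t)})^T A_{[k]}\Delta\alpha_{[k]}^{d_k(t)}$ get absorbed into $\mathcal{G}_k^{\sigma'}$; the leftovers are precisely $\tfrac{\lambda}{2}\bigl(\|w^t\|^2 - \tfrac{1}{K}\sum_k\|w^{d_k(t)}\|^2\bigr)$ and $\tfrac{1}{n}\sum_k\sum_{i\in P_k}(w^t - w^{d_k(t)})^T x_i\,\Delta\alpha_i^t$. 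Here I would use the key observation that local dual coordinates are frozen on a worker between its communications, so $\Delta\alpha_i^{d_k(t)} = \Delta\alpha_i^t$ for $i\in P_k$, matching the form in the lemma. Assembling these pieces and recognizing $-\tfrac{\lambda}{2}\|w^t\|^2 - \tfrac{1}{n}\sum_i\phi_i^*(-\alpha_i^t) = D(\alpha^t)$ produces the stated inequality.

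The main obstacle is the staleness bookkeeping in the final step: every discrepancy between $w^t$ and $w^{d_k(t)}$ has to end up in one of the two listed correction terms, with no extra cross terms surviving. The cleanliness of this collapse depends on (i) the exact calibration $\sigma' = \gamma B$ so that Jensen's inequality on the quadratic is tight enough to separate across workers, and (ii) the identity $\Delta\alpha_{[k]}^t = \Delta\alpha_{[k]}^{d_k(t)}$ induced by the local-dual freezing protocol. Once these two are verified, the rest is algebraic regrouping.
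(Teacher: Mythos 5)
Your overall route is the same as the paper's: split $D(\alpha^{t+1})$ into the $\phi_i^*$ part and the quadratic part, handle the first by writing $\alpha_i^{t+1}$ as a convex combination of $\alpha_i^t$ and $\alpha_i^t+\Delta\alpha_i$ and invoking convexity of $\phi_i^*$, handle the second by a norm inequality whose constant is absorbed into the $\frac{\lambda}{2}\sigma'\|\cdot\|^2$ penalty of $\mathcal{G}_k^{\sigma'}$, take expectation over the received group to turn $\sum_{k\in\Phi}$ into $\frac{B}{K}\sum_{k=1}^K$, and finally re-center every $w^t$ at $w^{d_k(t)}$ so that the two staleness corrections fall out as leftovers. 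The $\phi_i^*$ step, the expectation step, and the staleness bookkeeping (including the identity $\Delta\alpha^{d_k(t)}_{[k]}=\Delta\alpha^{t}_{[k]}$) all agree with the paper's proof of Lemma~\ref{lem1}.

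However, your treatment of the quadratic term is miscalibrated by a factor of $\gamma$. Writing $w^{t+1}=(1-\gamma)w^t+\frac{\gamma}{B}\sum_{k\in\Phi}\left(w^t+B\,\Delta w_k\right)$ and applying Jensen's inequality puts the coefficient $\frac{\gamma}{B}\cdot B^2=\gamma B=\sigma'$ on $\sum_{k\in\Phi}\|\Delta w_k\|^2$, whereas matching the term $\frac{B}{K}\gamma\sum_{k}\mathcal{G}_k^{\sigma'}$ in the lemma requires the coefficient $\gamma\cdot\sigma'=\gamma^2 B$ (the subproblem already carries $\frac{\lambda}{2}\sigma'\|\cdot\|^2$ inside it and is then multiplied by an additional $\gamma$). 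Since $\mathcal{G}_k^{\sigma}$ is decreasing in $\sigma$, your decomposition only proves the inequality with $\sigma'$ replaced by $B=\sigma'/\gamma$, which is strictly weaker for $\gamma<1$ and is not the stated lemma; your claim that $\sigma'=\gamma B$ is ``exactly what makes the coefficient agree'' fails under this regrouping. The fix is what the paper actually does: expand $\|w^t+\gamma\sum_{k\in\Phi}\Delta w_k\|^2$ directly and bound the cross-worker square by $\gamma^2\|\sum_{k\in\Phi}v_k\|^2\le\gamma^2 B\sum_{k\in\Phi}\|v_k\|^2=\gamma\sigma'\sum_{k\in\Phi}\|v_k\|^2$. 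A second, minor mismatch: the paper models reception through per-worker probabilities $q_k$ with an importance weight $\frac{1}{q_kK}$ inside the expectation, rather than a uniformly random size-$B$ subset; your uniform assumption yields the same $\frac{B}{K}$ factors but is a special case of what the paper (somewhat loosely) writes.
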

\begin{proof}
	There are $K$ workers in the cluster, the possibility of worker $k$ get received is $q_k$, such that $\mathbb{E}_k[\xi_k] = \sum\limits_{k=1}^K q_k \xi_k $ and $\sum\limits_{k=1}^K q_k = 1$.
	From our algorithm, we have $\alpha^{t+1} = \alpha^t + \gamma\sum\limits_{k\in \Phi} \Delta \alpha_{[k]}^{t} $, then it holds that: 
	{\small
		\begin{eqnarray}
		\label{10001}
		&&		\mathbb{E}_k\left[ \frac{1}{q_k K} D\left(\alpha^t + \gamma \sum\limits_{k\in \Phi}\Delta \alpha_{[k]}^{t}\right)\right]\nonumber  \\
		&&= \underbrace{ - \frac{1}{n} \sum\limits_{i=1}^n \mathbb{E}_k\left[ \frac{1}{q_k K} \phi_i^*\left(-\alpha^t_i - \gamma \left( \sum\limits_{k\in \Phi}\Delta\alpha_{[k]}^{t}\right)_i\right)\right]}_{Q_1} \nonumber \\
		&& 	- \frac{\lambda}{2}\underbrace{ \mathbb{E}_k \left[\frac{1}{q_k K} \left\|\frac{1}{\lambda n}A\left(\alpha^t + \gamma\sum\limits_{k\in \Phi} \Delta \alpha_{[k]}^{t} \right) \right\|^2 \right]}_{Q_2}.
		\end{eqnarray}
	}
	Then, we bounds term $Q_1$ and $Q_2$ separately:
	{ \small
		\begin{eqnarray}
		Q_1 
		& = & - \frac{B}{nK} \sum\limits_{k=1}^K \sum\limits_{i \in P_k} \phi_i^*\left(-\alpha^t_i - \gamma \left( \Delta \alpha^{t}_{[k]}\right)_i\right) \nonumber \\
		&& - \frac{K-B}{nK} \sum\limits_{k=1}^K \sum\limits_{i \in P_k} \phi_i^*\left(-\alpha^t_i\right) \nonumber \\
		&\geq & - \frac{B}{nK} \sum\limits_{k=1}^K  \sum\limits_{i \in P_k} \left( \left(1-\gamma \right)\phi_i^*\left(-\alpha^t_i\right) + \gamma \phi_i^*\left(-\left(\alpha^t + \Delta \alpha^{t}_{[k]}\right)_i\right) \right)  \nonumber\\
		&&  - \frac{K-B}{nK} \sum\limits_{k=1}^K \sum\limits_{i \in P_k} \phi_i^*\left(-\alpha^t_i\right)  \nonumber\\
		&=& - \frac{B\gamma}{Kn}\sum\limits_{k=1}^K  \sum\limits_{i \in P_k}  \phi_i^*\left(-\left(\alpha^t + \Delta \alpha^{t}_{[k]}\right)_i\right) \nonumber \\
		&& -\left(1 - \frac{ B }{K}\gamma\right)\frac{1}{n} \sum\limits_{k=1}^K \sum\limits_{i \in P_k} \phi_i^*(-\alpha^t_i) ,
		\end{eqnarray}
	}
	where the second equality follows that there are no duplicates of dataset in different workers, the inequality follows from the convexity of $ \phi_i^*$ and Jensen's inequality.
	{
		\begin{eqnarray}
		Q_2 
		&=& \left\|w^t\right\|^2 + \mathbb{E}_k \left[\frac{1}{q_kK}   \frac{  2\gamma}{\lambda n } \left(w^t\right)^T\sum\limits_{k\in \Phi} A_{[k]} \Delta \alpha^{t}_{[k]}\right] \nonumber \\
		&& +  \left(\frac{\gamma}{\lambda n}\right)^2  \mathbb{E}_k \left[ \frac{1}{q_k K} \left\| \sum\limits_{k\in \Phi}A_{[k]}\Delta \alpha^{t}_{[k]} \right\|^2\right] \nonumber \\
		&\leq &  \|w^t\|^2 +  \frac{2\gamma B }{\lambda nK} \sum\limits_{k=1}^K \left(w^t\right)^T A_{[k]} \Delta \alpha^{t}_{[k]} \nonumber \\
		&& +    \frac{B}{K}\left(\frac{1}{\lambda n}\right)^2\gamma \sigma'   \sum\limits_{k=1}^K\left\|  A_{[k]} \Delta \alpha^{t}_{[k]} \right\|^2 ,
		\end{eqnarray}
	}
	where we define $\sigma' := \gamma B $. Integrating the upper bound of $Q_1$ and $Q_2$ in inequality (\ref{10001}), we have:
	{\small
		\begin{eqnarray}
		&&	\mathbb{E}_k\left[ \frac{1}{q_k K} D\left(\alpha^t + \gamma \sum\limits_{k \in \Phi} \Delta \alpha_{[k]}^{t}\right)\right] \nonumber \\
		&\geq& \left(1-\frac{B}{K}\gamma\right) D(\alpha^t) + \frac{B}{K}\gamma \sum\limits_{k=1}^K \mathbb{E}_k\left[ \mathcal{G}_k^{\sigma'}\left(\Delta\alpha^{t}_{[k]};w^{d_k(t)},\alpha^t_{[k]} \right) \right]  \nonumber \\ \nonumber \\
		&&  -  \frac{B}{K}\gamma  \biggl( \frac{\lambda}{2} \|w^t\|^2- \frac{\lambda}{2} \frac{1}{K} \sum\limits_{k=1}^K\|w^{d_k(t)}\|^2 \nonumber \\
		&& + \frac{1}{n} \sum\limits_{k=1}^K\sum\limits_{i\in P_k} (w^t - w^{d_k(t)} )^T x_i \Delta \alpha_i^{t} \biggr).
		\end{eqnarray}
	}
\end{proof}

\textbf{Proof of Lemma \ref{lem2}}
\begin{proof}
	From our algorithm, primal dual relationship always holds that $w^t = \frac{1}{\lambda n} A\alpha^t$. According to  Lemma 1,
	let $Q_3 =  \frac{B}{K}\gamma  \biggl( \frac{\lambda}{2} \left\|w^t\right\|^2 - \frac{\lambda}{2} \frac{1}{K} \sum\limits_{k=1}^K\left\|w^{d_k(t)}\right\|^2 + \frac{1}{n} \sum\limits_{k=1}^K\sum\limits_{i\in P_k} (w^t - w^{d_k(t)} )^T x_i \Delta \alpha_i^{t} \biggr)$. 
	Define $\Delta \alpha_{[k]}^* =\arg\max\limits_{\Delta \alpha^t_{[k]}} \mathcal{G}_k^{\sigma'} (\Delta \alpha^t_{[k]}; w^{d_k(t)}, \alpha^t_{[k]})$, then we have:
	{\small
		\begin{eqnarray}
		&& \mathbb{E}_k\biggl[\frac{1}{q_k K} \left( D(\alpha^{t}) - D(\alpha^{t+1})\right) \biggr] \nonumber \\
		&\leq&  \frac{B}{K}\gamma   \biggl( D(\alpha^t) -  \mathbb{E}_k\left[\sum\limits_{k=1}^K \mathcal{G}_k^{\sigma'}\left(\Delta\alpha^{*}_{[k]};w^{d_k(t)},\alpha^t_{[k]} \right) \right] \nonumber \\
		&& +  \mathbb{E}_k\left[\sum\limits_{k=1}^K \mathcal{G}_k^{\sigma'}\left(\Delta\alpha^{*}_{[k]};w^{d_k(t)},\alpha^t_{[k]} \right) \right]\nonumber \\
		&& -   \mathbb{E}_k\left[\sum\limits_{k=1}^K \mathcal{G}_k^{\sigma'}\left(\Delta\alpha^{t}_{[k]};w^{d_k(t)},\alpha^t_{[k]} \right) \right]\biggr) + Q_3 \nonumber \\
		&\leq & \frac{B}{K}\gamma   \biggl( D(\alpha^t) -\mathbb{E}_k\left[\sum\limits_{k=1}^K \mathcal{G}_k^{\sigma'}\left(\Delta\alpha^{*}_{[k]};w^{d_k(t)},\alpha^t_{[k]} \right) \right] + Q_3 \nonumber \\
		&& + \Theta \biggl( \mathbb{E}_k\left[\sum\limits_{k=1}^K \mathcal{G}_k^{\sigma'}\left(\Delta\alpha^{t}_{[k]};w^{d_k(t)},\alpha^t_{[k]} \right) \right]  \nonumber \\
		&& -  	\mathbb{E}_k\left[\sum\limits_{k=1}^K \mathcal{G}_k^{\sigma'}\left(0;w^{d_k(t)},\alpha^t_{[k]} \right) \right] \biggr) \biggr)  ,
		\label{in_1001}
		\end{eqnarray}
	}
	where the last inequality follows from Assumption 2. As we know that {\small $\mathbb{E}_k\left[\sum\limits_{k=1}^K \mathcal{G}_k^{\sigma'}\left(0;w^{d_k(t)},\alpha^t_{[k]} \right) \right]= D(\alpha^t) + \underbrace{\frac{\lambda}{2} \left\|w^t\right\|^2 - \frac{\lambda}{2} \frac{1}{K} \sum\limits_{k=1}^K \left\|w^{d_k(t)}\right\|^2}_{Q_4}$}, we have:
{\small	\begin{eqnarray}
	\mathbb{E}_k\biggl[\frac{1}{q_kK} \left( D(\alpha^{t}) - D(\alpha^{t+1}) \right) \biggr]  \leq - \frac{ B}{K}\gamma  \Theta Q_4  + Q_3\nonumber \\
	+  \frac{B}{K} \gamma \left(1-\Theta\right) \underbrace{ \left( D(\alpha^t) - \mathbb{E}_k\left[\sum\limits_{k=1}^K \mathcal{G}_k^{\sigma'}\left(\Delta\alpha^{*}_{[k]};w^{d_k(t)},\alpha^t_{[k]} \right) \right]\right)}_{Q_5}.
	\label{20000}
	\end{eqnarray}}
	We can derive the upper bound of $Q_5$:
	{ \small
		\begin{eqnarray}
		&&Q_5\nonumber\\
		&= &\frac{1}{n} \sum\limits_{i=1}^n \biggl( \phi_i^*(-\alpha_i^t - \Delta \alpha_i^*) - \phi_i^*(-\alpha_i^t) \biggr)  -\frac{\lambda}{2} \left\|w^t\right\|^2 \nonumber \\
		&&+ \frac{\lambda}{2} \frac{1}{K}\sum\limits_{k=1}^K \left\|w^{d_k(t)}\right\|^2  \nonumber \\
		&& + \frac{1}{n} \sum\limits_{k=1}^K \sum\limits_{i\in P_k} (w^{d_k(t)})^T x_i\Delta \alpha_i^* + \sum\limits_{k=1}^K \frac{\lambda}{2} \sigma' \left\|\frac{1}{\lambda n} A_{[k]} \Delta \alpha_{[k]}^* \right\|^2  \nonumber \\
		&\leq & \frac{1}{n} \sum\limits_{i=1}^n \biggl( \phi_i^*\left(-\alpha_i^t - s(u_i^t - \alpha_i^t)\right) - \phi_i^*(-\alpha_i^t) \biggr)  -\frac{\lambda}{2} \left\|w^t\right\|^2 \nonumber \\
		&&  + \frac{\lambda}{2}\frac{1}{K}\sum\limits_{k=1}^K \left\|w^{d_k(t)}\right\|^2 + \frac{1}{n} \sum\limits_{k=1}^K \sum\limits_{i\in P_k} s (w^{d_k(t)})^T x_i (u_i^t - \alpha^t_i)  \nonumber \\ 
		&& +  \sum\limits_{k=1}^K \frac{\lambda}{2} \sigma' \left\|\frac{1}{\lambda n} A_{[k]} s(u^t - \alpha^t)_{[k]} \right\|^2\nonumber \\
		&\leq &  \frac{1}{n}  \sum\limits_{i=1}^n \biggl( s \phi_i^*(-u_i^t) -s \phi_i^*(-\alpha_i^t) -\frac{\mu}{2} (1-s)s(u_i^t - \alpha_i^t)^2  \biggr)      \nonumber \\
		&&  + \frac{\lambda}{2}\frac{1}{K}\sum\limits_{k=1}^K \left\|w^{d_k(t)}\right\|^2  + \frac{1}{n}  \sum\limits_{k=1}^K \sum\limits_{i\in P_k} s (w^{d_k(t)})^T x_i (u_i^t - \alpha^t_i) \nonumber \\
		&& + \sum\limits_{k=1}^K \frac{\lambda}{2} \sigma' \left\|\frac{1}{\lambda n} A_{[k]} s(u - \alpha^t)_{[k]} \right\|^2 -\frac{\lambda}{2} \left\|w^t\right\|^2
		\end{eqnarray}
	}
	where $u_i^t - \alpha_i^t = \Delta \alpha_i^{t}$ and $-u_i^t \in \partial \phi_i((w^{t})^Tx_i)$.  The second inequality follows from the strong convexity of $\phi_i^*$:
	\begin{eqnarray}
	&&	\phi_i^*(-\alpha_i^t - s(u_i^t - \alpha_i^t) ) \nonumber  \\
	& \leq&   s \phi_i^*(-u_i^t) + (1-s) \phi_i^*(-\alpha_i^t) -\frac{\mu}{2} (1-s)s(u_i^t - \alpha_i^t)^2 .
	\end{eqnarray} 
	Because of convex conjugate function maximal property, it holds that:
	\begin{eqnarray}
	\phi_i^*(-u_i) & =& -u_i(w^{t})^Tx_i - \phi_i\left((w^{t})^Tx_i\right).
	\end{eqnarray}
	Additionally, duality gap can be represented as:
$
	G(\alpha^t) =  \frac{1}{n} \sum\limits_{i=1}^n \biggl( \phi_i((w^{t})^Tx_i) + \phi_i^*(-\alpha^t_i) + (w^{t})^Tx_i \alpha^t_i  \biggr).
$
	Therefore,
	{\small
		\begin{eqnarray}
		\label{Q_2_in} 
			Q_5  \nonumber 	
		&\leq& - s \left(P(w^t)-D(\alpha^t)\right) +  \biggl(  \frac{\lambda}{2} \frac{1}{K}\sum\limits_{k=1}^K\left\|w^{d_k(t)}\right\|^2 - \frac{\lambda}{2} \left\|w^t\right\|^2 \nonumber  \\
		&&+ \frac{1}{n} \sum\limits_{k=1}^K \sum\limits_{i\in P_k} s (w^{d_k(t)} - w^t)^T x_i (u_i^t - \alpha^t_i)   \nonumber \\
		&&+  \frac{\sigma'}{2\lambda} \left(\frac{s}{n}\right)^2\sum\limits_{k=1}^K \left\|A_{[k]} \left(u^t - \alpha^t\right)_{[k]} \right\|^2 \nonumber \\
		&&-\frac{\mu}{2} (1-s)s\frac{1}{n}\sum\limits_{i=1}^n(u_i^t - \alpha_i^t)^2   \biggr).
		\label{20001}
		\end{eqnarray}
	}
	Let $Q_6$ represents the right term of the last row in (\ref{Q_2_in}) such that:
	\begin{eqnarray}
	Q_6 &=&  \frac{\lambda}{2} \frac{1}{K}\sum\limits_{k=1}^K\|w^{d_k(t)}\|^2  - \frac{\lambda}{2} \left\|w^t\right\|^2  \nonumber\\
	&& + \frac{1}{n} \sum\limits_{k=1}^K \sum\limits_{i\in P_k} s (w^{d_k(t)} - w^t)^T x_i (u_i^t - \alpha^t_i) \nonumber \\
	& &+  \frac{\sigma'}{2\lambda} \left(\frac{s}{n}\right)^2\sum\limits_{k=1}^K \left\|A_{[k]} (u^t - \alpha^t)_{[k]} \right\|^2 \nonumber \\
	&&-\frac{\mu}{2} (1-s)s\frac{1}{n}\sum\limits_{i=1}^n\left(u_i^t - \alpha_i^t\right)^2.
	\label{20002}
	\end{eqnarray} 
	Input (\ref{20001}) and (\ref{20002}) in (\ref{20000}), we have:
	\begin{eqnarray}
	\mathbb{E}_k\biggl[\frac{1}{q_kK} \left( D(\alpha^{t}) - D(\alpha^{t+1}) \right) \biggr] \leq -\frac{B}{K}\gamma s(1-\Theta) G(\alpha^t) \nonumber \\
	+  {\frac{B}{K}\gamma (1-\Theta)Q_6 -  \frac{B}{K}\gamma  \Theta Q_4 + Q_3}.
	\end{eqnarray}  
	
	Let $R^t = \frac{\sigma'}{2\lambda} \left(\frac{s}{n}\right)^2\sum\limits_{k=1}^K \left\|A_{[k]} (u^t - \alpha^t)_{[k]} \right\|^2 -\frac{\mu}{2} \left(1-s\right)s\frac{1}{n}\sum\limits_{i=1}^n(u_i^t - \alpha_i^t)^2  $, then we have:
{\small	\begin{eqnarray}
	&& {\frac{B}{K}\gamma (1-\Theta)Q_6 -  \frac{ B}{K} \gamma \Theta Q_4 + Q_3} \nonumber \\
	& = & \frac{B}{K}\gamma (1-\Theta) \biggl(  \frac{\lambda}{2}  \frac{1}{K}\sum\limits_{k=1}^K \left\|w^{d_k(t)}\right\|^2 - \frac{\lambda}{2} \left\|w^t\right\|^2 \nonumber \\
	&&+ \frac{1}{n} \sum\limits_{k=1}^K \sum\limits_{i\in P_k} s (w^{d_k(t)} - w^t)^T x_i (u_i^t - \alpha^t_i)  \biggr) \nonumber \\
	&& - \frac{B}{K}\gamma  \Theta  \biggl( \frac{\lambda}{2} \left\|w^t\right\|^2 - \frac{\lambda}{2}\frac{1}{K}\sum\limits_{k=1}^K \left\|w^{d_k(t)}\right\|^2 \biggr)  \nonumber \\
	&&+  \frac{B}{K}\gamma  \biggl(  - \frac{\lambda}{2} \frac{1}{K} \sum\limits_{k=1}^K\left\|w^{d_k(t)}\right\|^2+ \frac{\lambda}{2} \left\|w^t\right\|^2 \nonumber \\
	&& + \frac{1}{n} \sum\limits_{k=1}^K\sum\limits_{i\in P_k} (w^t - w^{d_k(t)} )^T x_i \Delta \alpha_i^{t} \biggr)+  \frac{B}{K}\gamma (1-\Theta) R^t \nonumber \\
	& \leq & \frac{B}{K}\gamma (1-\Theta) R^t  \nonumber \\
	&&+  \frac{B}{K}\gamma s(1-\Theta) \frac{1}{n}  \sum\limits_{k=1}^K \sum\limits_{i\in P_k} \left\|\sum\limits_{j=d_k(t)}^{t-1} \gamma \sum\limits_{k \in \Phi} \frac{1}{\lambda n} A_{[k]}  \Delta \alpha^{j}_{[k]} \right\|\left\|x_i(u_i^t-\alpha_i^t)\right\| \nonumber \\
	&& + \frac{B}{K} \gamma \frac{1}{n}  \sum\limits_{k=1}^K \sum\limits_{i\in P_k} \left\|\sum\limits_{j=d_k(t)}^{t-1} \gamma \frac{1}{\lambda n} A \Delta \alpha^{d(j)} \right\|  \left\|x_i \Delta \alpha^t_i\right\| \nonumber \\
	&\leq & \frac{B}{K}\gamma (1-\Theta) R^t  + {Q_7},
	\end{eqnarray}}
	where the last inequality follows from $\|x_i\| \leq 1$ and we let $Q_7={ \left( \frac{ \gamma^2 s (1-\Theta)B}{  \lambda n K }  + \frac{ \gamma^2 B}{  \lambda n K }\right) \sum\limits_{j=d_k(t)}^{t-1} \sum\limits_{i=1}^n \left\|u_i^{d(j)} - \alpha_i^{d(j)}\right\|\left\|u_i^t - \alpha_i^t\right\| }$ 
\end{proof}

\end{document}